\newenvironment{sproof}{%
	\proof}{\endproof}
\newcommand{\indep}{\rotatebox[origin=c]{90}{$\models$}}
\newcommand{\nindep}{\not\!\!\rotatebox[origin=c]{90}{$\models$}}
\newtheorem{definition}{Definition}%[section]
\newtheorem{proposition}{Proposition}%[section]
\newtheorem{theorem}{Theorem}%[section]
\newtheorem*{problem}{Problem}%[section]
\newtheorem{assumption}{Assumption}%[section]
\definecolor{ugagray}{RGB}{211, 211, 211} 
\definecolor{ugablue}{RGB}{12, 35, 68} 
\definecolor{ugaorange}{RGB}{235, 91, 10} 
\definecolor{easyblue}{RGB}{3, 40, 89} 
\definecolor{easycyan}{RGB}{1, 111, 148} 
\definecolor{easyorange}{RGB}{254, 106, 44} 
\definecolor{easyred}{RGB}{211, 93, 110} 
\definecolor{easyyellow}{RGB}{249, 212, 157} 
\definecolor{easygreen}{RGB}{121, 191, 101} 
\begin{document}

% If your paper is accepted and the title of your paper is very long,
% the style will print as headings an error message. Use the following
% command to supply a shorter title of your paper so that it can be
% used as headings.
%
%\runningtitle{I use this title instead because the last one was very long}

% If your paper is accepted and the number of authors is large, the
% style will print as headings an error message. Use the following
% command to supply a shorter version of the authors names so that
% they can be used as headings (for example, use only the surnames)
%
%\runningauthor{Surname 1, Surname 2, Surname 3, ...., Surname n}

%\twocolumn[
%
%\aistatstitle{Root Cause Identification for Collective Anomalies in Time Series given an Acyclic Summary Causal Graph with Loops}
%
%\aistatsauthor{ Charles K. Assaad \And Imad Ez-zejjari \And  Lei Zan }
%
%\aistatsaddress{ EasyVista \And  EasyVista \And EasyVista \\ Univ Grenoble Alpes,\\ CNRS, Grenoble INP, LIG} ]

\title{Root Cause Identification for Collective Anomalies in Time Series given an Acyclic Summary Causal Graph with Loops}

%\subtitle{}

\author{\href{https://orcid.org/0000-0003-3571-3636}{\includegraphics[scale=0.06]{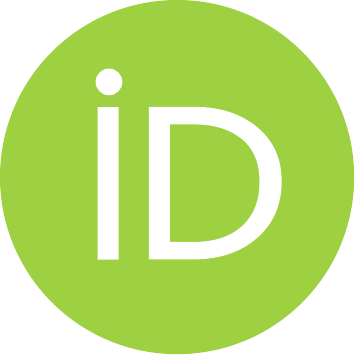}\hspace{1mm}Charles K. Assaad} \\ EasyVista 
	\And 
	Imad Ez-zejjari \\ EasyVista
	\And
	\href{https://orcid.org/0000-0003-4695-5059}{\includegraphics[scale=0.06]{orcid.pdf}\hspace{1mm}Lei Zan} \\ EasyVista \\ Univ Grenoble Alpes, \\CNRS, Grenoble INP, LIG
}
%\author{ \href{https://orcid.org/0000-0000-0000-0000}{David S.~Hippocampus} \\
%	Department of Computer Science\\
%	Cranberry-Lemon University\\
%	Pittsburgh, PA 15213 \\
%	\texttt{hippo@cs.cranberry-lemon.edu} \\
%	%% examples of more authors
%	\And
%	\href{https://orcid.org/0000-0000-0000-0000}{Elias D.~Striatum} \\
%	Department of Electrical Engineering\\
%	Mount-Sheikh University\\
%	Santa Narimana, Levand \\
%	\texttt{stariate@ee.mount-sheikh.edu} \\
%	%% \AND
%	%% Coauthor \\
%	%% Affiliation \\
%	%% Address \\
%	%% \texttt{email} \\
%	%% \And
%	%% Coauthor \\
%	%% Affiliation \\
%	%% Address \\
%	%% \texttt{email} \\
%	%% \And
%	%% Coauthor \\
%	%% Affiliation \\
%	%% Address \\
%	%% \texttt{email} \\
%}

\date{}

\maketitle

\begin{abstract}
	This paper presents an approach for identifying the root causes of collective anomalies given observational time series and an acyclic summary causal graph which depicts an abstraction of causal relations present in a dynamic system at its normal regime. The paper first shows how the problem of root cause identification can be divided into many independent subproblems by grouping related anomalies using \textit{d-separation}. Further, it shows how, under this setting, some root causes can be found  directly from the graph and from the time of appearance of anomalies. Finally, it shows, how the rest of the root causes can be found by comparing direct effects in the normal and in the anomalous regime. To this end, an adjustment set for identifying direct effects is introduced. Extensive experiments conducted on both simulated and real-world datasets  demonstrate the effectiveness of the proposed method.
\end{abstract}

\section{INTRODUCTION}
\label{sec:intro}

The need for high availability of information systems requires efficient monitoring tools and a new generation of AIOps software to automate the identification of actionable root causes of anomalies in an IT monitoring system that can be used to eliminate the anomalies. 
In recent years, many approaches have been developed to identify the root causes of anomalies in multivariate time series. The most common direction that is explored considers discovering the causal graph \citep{Pearl_2000, Spirtes_2000} that represents the anomalous regime of the dynamic system \citep{Wang_2018, Meng_2020} using observational time series. 
Causal discovery methods are known to rely on strong assumptions which imply that they necessitate validation by an expert especially when there is no guarantee that these assumptions are satisfied.
In addition, causal discovery methods usually need large sample sizes \citep{Malinsky_2018, Glymour_2019, Assaad_2022_a}.
However, in many domains, the size of anomalous data depends on the sampling rate of the system. 
Thus systems with low sampling rates will collect a small size of anomalous data compared to systems with high sampling rates.
%Thus, as there exist systems in which the sampling rate can vary between $1$min and $1$h depending on the need of the customer. 
%It is then possible that after one day of anomalous behavior to end up with an anomalous time series with a small size.  
In consequence, sometimes, it is difficult to have enough data for causal discovery methods. 
And even if a sufficient amount of data is collected, the process of validation of the causal graph by an expert is time consuming and can delay the elimination of anomalies. 
%can be a cumbersome task that can take a lot of time

To tackle this issue, we follow a different approach for root cause analysis which consists of discovering and reasoning about the summary causal graphs \citep{Assaad_2022_a} which depicts an abstraction of causal relations present in a dynamic system at its normal regime. Usually, the size of data collected in the normal regime is significantly greater than the size of data collected in the anomalous regime since anomalies are supposed to be rare. In addition, system experts can have sufficient time to validate the graph long before the appearance of anomalies. 
Note that in this work, we do not adress the problem of causal discovery of the summary causal graph of the normal regime and we assume that the graph is already learned and validated by a system expert.

This paper presents a new method for root cause identification, which we call EasyRCA, which consists of using a summary causal graph of the normal regime in order to divide the problem of root cause identification into many independent subproblems by grouping related anomalies using \textit{d-separation}. Then for each group, EasyRCA finds the root causes either directly from the graph and the time of appearance of anomalies or by comparing direct effects in the normal and anomalous regime. To this end, an adjustment set for identifying direct effects is introduced.

The remainder of the paper is organized as follows:
Section~\ref{sec:setup} introduces some terminology and formalizes the problem.
Section~\ref{sec:RW} describes related work. 
Section~\ref{sec:rca} presents our method EasyRCA which is evaluated on simulated and real datasets in Section~\ref{sec:exp}. Finally, Section~\ref{sec:conc} concludes the paper.

\section{Problem setup}
\label{sec:setup}

In this section, we first introduce some terminology, tools, and assumptions which are standard for the major part. Then, we formalize the problem we are going to solve.

Suppose that a dynamic system can be represented by a structural causal model (SCM) \citep{Pearl_2000} in which each point in a time series is given by a function (so-called causal mechanism) of its parents and an unobserved noise:
\begin{equation}
\label{eq:SCM}
Y_t := f^y_t(Parents(Y_t), \xi^y_t)
\end{equation}
where the noise variables are jointly statistically independent so that there are no hidden confounding , i.e., causal sufficiency is satisfied \citep{Spirtes_2000}. The qualitative causal relations induced by such SCM can be represented by a causal graph in which, under the causal Markov condition \citep{Spirtes_2000}  each vertex is independent of all other vertices given its parents, except for its descendants. In dynamic systems, these causal graphs are referred to as full-time causal graphs. An example of such a graph is presented in Figure~\ref{fig:full-graph}.
The main difficulty in working with this type of graph is that it is infinite and so in practice, inferring it is unfeasible. 
However, it is very likely that causal relations between two time series will hold throughout time as such relations are generally associated with underlying physical
processes. Thus we can assume consistency throughout time.
\begin{assumption}[Consistency throughout time, \citep{Assaad_2022_a}]
	A full time causal graph is said to be \emph{consistent throughout time} if all the causal relationships remain constant in direction throughout time.
	\label{def:consistence-time}
\end{assumption}
Under this assumption, the full time causal graph can be contracted to give a finite graph which is called a window  causal graph. It is a representation of the causal relations through a time window, the size of which depends on the maximum lag between a cause and an effect in the full time causal graph. An example of a window causal graph is given in Figure\ref{fig:window-graph}.
This said, it is usually difficult for an expert to validate, analyze let alone provide a window causal graph because it is difficult to determine the temporal lag between a cause and an effect. Thus, experts usually rely on the so-called summary causal graph which is a compact version of the window causal graph that represents the causal relations between time series without giving any information about the temporal lags of these relations.
In this work, we assume that the summary causal graph is acyclic but loops are allowed to represent temporal dependencies within the same time series. %We denote an acyclic summary causal graph as ASCGL and an example of such graph is given in Figure~\ref{fig:summary-graph}. 
An example of such graph is given in Figure~\ref{fig:summary-graph} and formally it is defined as follows:
\begin{definition}[Acyclic summary causal graph with loops]
	\label{defCausalGraph}
	Consider $\mathcal{G} = (\mathcal{V}, \mathcal{E})$ is a \emph{summary causal graph}. The set of vertices in that graph consists of the set of time series. The arcs $\mathcal{E}$ of the graph are defined as follows: $\forall X, Y \in \mathcal{V}$, $X$ causes $Y$ if and only if there exists some time lag $\gamma$ such that $X_{t-\gamma}$ causes $Y_{t}$ such that $\gamma \ge 0$ for $X \ne Y$ and $\gamma>0$ for $X=Y$. If $\mathcal{G}= (\mathcal{V}, \mathcal{E})$ has no directed cycles other than the edges going from one vertex to itself, then $\mathcal{G} = (\mathcal{V}, \mathcal{E})$ is said to be an \emph{acyclic summary causal graph with loops} (ASCGL).
\end{definition}
In this work, we suppose that the ASCGL is given either using experts knowledge or learned directly from observational time series \citep{Peters_2013, Assaad_2021, Assaad_2022_c} or by first discovering a window causal graph\footnote{In our framework, if one decides using a causal discovery algorithm that learns a window causal graph, then one needs to carefully incorporate to these algorithms the constraint that the summary causal graphs should be acyclic.} \citep{Runge_2019, Runge_2020} and then deduce the ASCGL from it.
The correctness of such learned graphs usually rests on untestable assumptions and depends on the quality of the data so it is important to validate it by an expert or to simplify the problem of causal discovery by providing some background knowledge.

\begin{figure*}[!ht]
	\centering
	\begin{subfigure}{0.48\textwidth}
		\centering
		\begin{tikzpicture}[{black, circle, draw, inner sep=0}]
		\tikzset{nodes={draw,rounded corners},minimum height=0.7cm,minimum width=0.7cm}
		\tikzset{latent/.append style={fill=gray!30}}
		
		\node (X-2) at (0,-1) {$Z_{t-2}$} ;
		\node (X-1) at (3,-1) {$Z_{t-1}$};
		\node (X) at (6,-1) {$Z_{t}$};
		\node (Z-2) at (0,0) {$X_{t-2}$} ;
		\node (Z-1) at (3,0) {$X_{t-1}$};
		\node (Z) at (6,0) {$X_{t}$};
		\node (Y-2) at (0,1) {$W_{t-2}$} ;
		\node (Y-1) at (3,1) {$W_{t-1}$};
		\node (Y) at (6,1) {$W_{t}$};
		\node (W-2) at (0,2) {$Y_{t-2}$} ;
		\node (W-1) at (3,2) {$Y_{t-1}$};
		\node (W) at (6,2) {$Y_{t}$};
		%		\node (X1) at (9,-1) {$Z_{t+1}$} ;
		%		\node (X2) at (12,-1) {$Z_{t+2}$};
		%		\node (Z1) at (9,0) {$X_{t+1}$} ;
		%		\node (Z2) at (12,0) {$X_{t+2}$};
		%		\node (Y1) at (9,1) {$W_{t+1}$} ;
		%		\node (Y2) at (12,1) {$W_{t+2}$};
		%		\node (W1) at (9,2) {$Y_{t+1}$} ;
		%		\node (W2) at (12,2) {$Y_{t+2}$};
		
		\draw[->,>=latex] (Z-2) -- (Z-1);
		\draw[->,>=latex] (Z-1) -- (Z);
		\draw[->,>=latex] (Y-2) -- (Y-1);
		\draw[->,>=latex] (Y-1) -- (Y);
		\draw[->,>=latex] (X-2) -- (X-1);
		\draw[->,>=latex] (X-1) -- (X);
		\draw[->,>=latex] (W-2) -- (W-1);
		\draw[->,>=latex] (W-1) -- (W);
		
		\draw[->,>=latex] (X-2) -- (Z-1);
		\draw[->,>=latex] (X-1) -- (Z);
		\draw[->,>=latex] (X-2) -- (Y-1);
		\draw[->,>=latex] (X-1) -- (Y);
		%		\draw[->,>=latex] (X) -- (Z1);
		%		\draw[->,>=latex] (X1) -- (Z2);
		%		\draw[->,>=latex] (X) -- (Y1);
		%		\draw[->,>=latex] (X1) -- (Y2);

		%		\draw[->,>=latex] (Z1) -- (Z2);
		%		\draw[->,>=latex] (Y1) -- (Y2);
		%		\draw[->,>=latex] (Z) -- (Z1);
		%		\draw[->,>=latex] (Y) -- (Y1);
		%		\draw[->,>=latex] (W1) -- (W2);
		%		\draw[->,>=latex] (X1) -- (X2);
		\draw[->,>=latex] (Z-1) -- (Z);
		\draw[->,>=latex] (Y-1) -- (Y);
		%		\draw[->,>=latex] (W) -- (W1);
		%		\draw[->,>=latex] (X) -- (X1);
		
		\draw[->,>=latex] (Y) to [out=45,in=-45, looseness=1] (W);
		\draw[->,>=latex] (Z) to [out=45,in=-25, looseness=1] (W);		
		%		\draw[->,>=latex] (Y1) to [out=45,in=-45, looseness=1] (W1);
		%		\draw[->,>=latex] (Z1) to [out=45,in=-25, looseness=1] (W1);
		%		\draw[->,>=latex] (Y2) to [out=45,in=-45, looseness=1] (W2);
		%		\draw[->,>=latex] (Z2) to [out=45,in=-25, looseness=1] (W2);		
		\draw[->,>=latex] (Y-1) to [out=45,in=-45, looseness=1] (W-1);
		\draw[->,>=latex] (Z-1) to [out=45,in=-25, looseness=1] (W-1);	
		\draw[->,>=latex] (Y-2) to [out=45,in=-45, looseness=1] (W-2);
		\draw[->,>=latex] (Z-2) to [out=45,in=-25, looseness=1] (W-2);
		
		\coordinate[left of=X-2] (d1);
		\draw [dashed,>=latex] (X-2) to[left] (d1);
		\coordinate[left of=Z-2] (d1);
		\draw [dashed,>=latex] (Z-2) to[left] (d1);
		\coordinate[left of=Y-2] (d1);
		\draw [dashed,>=latex] (Y-2) to[left] (d1);		
		\coordinate[left of=W-2] (d1);
		\draw [dashed,>=latex] (W-2) to[left] (d1);
		
		\coordinate[right of=X] (d1);
		\draw [dashed,>=latex] (X) to[right] (d1);
		\coordinate[right of=Z] (d1);
		\draw [dashed,>=latex] (Z) to[right] (d1);
		\coordinate[right of=Y] (d1);
		\draw [dashed,>=latex] (Y) to[right] (d1);
		\coordinate[right of=W] (d1);
		\draw [dashed,>=latex] (W) to[right] (d1);		
		\end{tikzpicture}
		\caption{Full time causal graph}
		\label{fig:full-graph}
	\end{subfigure}%
	\hfill
	\begin{subfigure}{.3\textwidth}
		\centering
		\begin{tikzpicture}[{black, circle, draw, inner sep=0}]
		\tikzset{nodes={draw,rounded corners},minimum height=0.7cm,minimum width=0.7cm}
		\tikzset{latent/.append style={fill=gray!30}}
		
		\node (X-1) at (3,-1) {$Z_{t-1}$};
		\node (X) at (6,-1) {$Z_{t}$};
		\node (Z-1) at (3,0) {$X_{t-1}$};
		\node (Z) at (6,0) {$X_{t}$};
		\node (Y-1) at (3,1) {$W_{t-1}$};
		\node (Y) at (6,1) {$W_{t}$};
		\node (W-1) at (3,2) {$Y_{t-1}$};
		\node (W) at (6,2) {$Y_{t}$};
		
		\draw[->,>=latex] (Y) to [out=45,in=-45, looseness=1] (W);
		\draw[->,>=latex] (Z) to [out=45,in=-25, looseness=1] (W);		
		\draw[->,>=latex] (Y-1) to [out=45,in=-45, looseness=1] (W-1);
		\draw[->,>=latex] (Z-1) to [out=45,in=-25, looseness=1] (W-1);	
		
		\draw[->,>=latex] (Z-1) -- (Z);
		\draw[->,>=latex] (Y-1) -- (Y);
		\draw[->,>=latex] (X-1) -- (X);
		\draw[->,>=latex] (W-1) -- (W);
		\draw[->,>=latex] (X-1) -- (Z);
		\draw[->,>=latex] (X-1) -- (Y);
		
		\draw[->,>=latex] (Z-1) -- (Z);
		\draw[->,>=latex] (Y-1) -- (Y);
		\end{tikzpicture}
		\caption{Window  causal graph}
		\label{fig:window-graph}
	\end{subfigure}\hfill
	\begin{subfigure}{.22\textwidth}
		\centering
		\begin{tikzpicture}[{black, circle, draw, inner sep=0}]
		\tikzset{nodes={draw,rounded corners},minimum height=0.7cm,minimum width=0.7cm}
		\tikzset{latent/.append style={fill=gray!30}}
		
		\node (X) at (0,-0.4) {$X$} ;
		\node (Z) at (1,1) {$Z$};
		\node (Y) at (2,-0.4) {$W$};
		\node (W) at (1,-1.8) {$Y$};
		\draw[->,>=latex] (Y) to [out=0,in=45, looseness=2] (Y);
		\draw[->,>=latex] (Z) to [out=0,in=45, looseness=2] (Z);
		\draw[->,>=latex] (X) to [out=180,in=135, looseness=2] (X);
		\draw[->,>=latex] (W) to [out=0,in=-45, looseness=2] (W);
		
		\draw[->,>=latex] ( Z) -- (X);
		\draw[->,>=latex] (Z) -- (Y);
		
		\draw[<-,>=latex] (W) -- (X);
		\draw[<-,>=latex] (W) -- (Y);
		\end{tikzpicture}
		\caption{Summary causal graph}
		\label{fig:summary-graph}
	\end{subfigure}
	\hspace{0.8cm}
	\caption[Different causal graphs that one can infer from three time series.]{Different causal graphs that one can infer from three time series: full time causal graph \eqref{fig:full-graph}, window  causal graph \eqref{fig:window-graph} and summary causal graph \eqref{fig:summary-graph}. Note that the first one gives  more information but cannot be inferred in practice, the second one is a schematic viewpoint of the full behavior, whereas the last one is an abstraction and can be deduced from the window causal graph.}
	\label{fig:graphs}
\end{figure*}
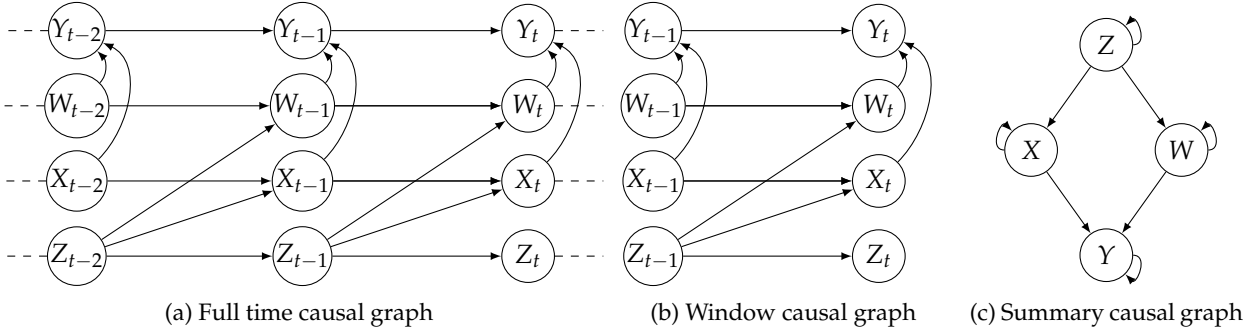

Now, we turn our focus on anomalies. In this work, we assume that anomalies are collective.
\begin{definition}[Collective anomaly, \cite{Chandola_2009}]
	\label{def:collective_anomaly}
	In time series, a collective anomaly is a sequence of data instances that is anomalous with respect to the entire time series.
\end{definition}
Point anomalies are disregarded because we are interested in finding actionable root causes that can eliminate anomalies. If an anomaly appears for one time instant and then disappears this means it was eliminated on its own and do not require any action to resolve it. 
However, we also consider that those collective anomalies have a limited size since information systems are expected to be highly available (without anomalies). 
In addition, we assume that each anomaly can be eliminated by removing the intervention that caused it directly or via a causal path. This is given by the following assumption:

\begin{assumption}
	\label{ass:anomaly_propagation}
	All anomalies are propagated from an external intervention through the structural causal model.
\end{assumption}
%This assumption means thathat all anomalies can be eliminated by removing the intervention that caused it directly or via a causal path.

Given the definition of ASCGL and collective anomalies, we define root causes\footnote{The "root causes" are relative to the set of observed time series.} as follows:
\begin{definition}[Root causes]
	\label{def:rc}
	Given an ASCGL and a set of anomalous vertices $\mathcal{A}$, the set of root causes $\mathcal{C}$ of $\mathcal{A}$ is a set of vertices that were affected by an external intervention which led to marginal distribution change in $\mathcal{A}$.
\end{definition}
%Since our goal is to find actionable  root causes that can be used to eliminate anomalies, 
%Since system experts need to decide which root cause should be solved first when facing multiple root causes, it might be useful to know more about the type of external interventions that affected these root causes.
In the literature, there exists two sorts of interventions \citep{Eberhardt_2007} and both are crucial to root cause analysis.
The first is known as parametric intervention and it is defined as follows:
\begin{definition}[Parametric intervention]
	Consider an ASCGL $\mathcal{G}=(\mathcal{V}, \mathcal{E})$.
	An intervention  on a vertex $Y\in\mathcal{V}$ is parametric if the causal mechanism before intervention is different than after the intervention but $Parents(Y, \mathcal{G})$ remains unchanged.
\end{definition}
The second type of intervention is known as structural intervention\footnote{Our definition of structural intervention is less restrictive than the classical definition which states that the intervention alone completely determines the probability distribution of the variable that underwent the intervention, i.e., this variable becomes independent of all of its parents.} 
and it is defined as follows:
\begin{definition}[Structural intervention]
	Consider an ASCGL $\mathcal{G}=(\mathcal{V}, \mathcal{E})$.
	An intervention  on a vertex $Y\in\mathcal{V}$ is structural if $\exists X\in\mathcal{V}$ such that $X\in Parents(Y, \mathcal{G})$ before the intervention and $X\not\in Parents(Y, \mathcal{G})$ after the intervention.
\end{definition}

Structural interventions can be regarded as a special case of parametric interventions. But we distinguish between them because an expert might want to differentiate between interventions that provoke a disruption in the system from the ones that do not.
As it will be shown in Section~\ref{sec:rca}, to find these two types of interventions we will estimate the direct effects in the normal and anomalous regime. In consequence, we assume that causal mechanisms are fixed throughout time within the same regime (e.g., in Equation~\ref{eq:SCM}, $f^y(.)$  is fixed for all $t$) and we assume the minimality condition\footnote{The minimality condition is usually assumed by causal discovery methods either directly or by assuming a stronger assumption called faithfulness which implies the minimality condition \citep{Glymour_2019}.} \citep{Spirtes_2000} which implies that adjacent vertices in the ASCGL are statistically dependent in the normal regime.
%\begin{assumption}[Minimality condition]
%	\label{ass:minimality}
%	An ASCGL $\mathcal{G}$ compatible with a probability $\Pr$
%	distribution is said to satisfy the minimality condition if $\Pr$ is not compatible with any proper subgraph of $\mathcal{G}$.
%\end{assumption}
Finally, to simplify the problem we assume linear SCMs.
%\footnote{A part of this work can be directly used for nonlinear SCMs.}

Now that we have introduced the needed tools and assumptions, the problem we are trying to solve is formalized as follows:
\begin{problem}
	Given an ASCGL $\mathcal{G}=(\mathcal{V}, \mathcal{E})$, a set of anomalous vertices $\mathcal{A} \subset \mathcal{V}$, the distribution of the time series in the normal regime $\mathcal{{N}}$ and in the anomalous regime $\mathcal{\bar{N}}$, and the maximal lag between a cause and an effect $\gamma_{max}$, we want to find the smallest set of root causes $\mathcal{C}$ of $\mathcal{A}$.
\end{problem}

\section{RELATED WORKS}
\label{sec:RW}

Recently, there has been an increase in the popularity of automating the process of root cause analysis. Among the most popular unsupervised methods that deal with time series is CloudRanger \citep{Wang_2018} which is decomposed into two steps. First, it discovers the summary causal graph between anomalous time series using the PC algorithm \citep{Spirtes_2000} which was introduced for non-temporal data.
Then, identifies root causes through random walk based on a transition matrix computed using the correlation between time series.  
As one might expect, the main limitation of this method is that it uses a non-temporal algorithm that does not take into account temporal lags that might exist between two time series. In addition, correlation does not necessarily represent the causal effect of one variable on  another.
To fix these issues, \cite{Meng_2020}  proposed a similar method, called MicroCause, where the causal discovery is done using the PCMCI\footnote{There exists two versions of PCMCI, one that allows for instantaneous relations \citep{Runge_2020} and one that does not \citep{Runge_2019}. In the experimentation section, we use the version that allows instantaneous relations.} \citep{Runge_2019, Runge_2020} algorithm, an extension of the PC algorithm for time series which infers a window causal graph. Then, MicroCause deduces from the inferred window causal graph a summary causal graph. Furthermore, to compute the transition matrix for the random walk,  MicroCause estimates the partial correlation between each causally related time series given their parents in the graph. Conditioning on the parents is a sufficient condition to eliminate all spurious correlations when there are no hidden common causes.
Note that PC and PCMCI use conditional independencies to infer the causal graph and such methods are known to be correct when the faithfulness condition is satisfied. %As stated earlier, this is a stronger condition compared to the minimality condition.
Closer to our proposal, \cite{Budhathoki_2021} introduced a formal method, that we will denote as WhyMDC, to detect the root cause of a change in a marginal distribution from non temporal data. WhyMDC considers that a directed acyclic causal graph is given and as far as we know, it is the first method to identify root causes by searching for changes in causal mechanisms.

There exist other root cause identification methods which are beyond the scope of this paper. For example, \cite{Budhathoki_2022} proposed a root cause analysis framework to detect the root cause of a point anomaly using non-temporal structural causal models %They suppose that the causal graph is given and assume it describes the causal relation in the normal regime as well as in the anomalous regime. 
%\cite{Zhang_2021} gave another extension of CloudRanger in which they proposed to use log messages in addition to monitoring time series. 
and \cite{Zhang_2022} proposed a supervised learning approach to find root causes.

\section{ROOT CAUSE IDENTIFICATION USING ASCGLs}
\label{sec:rca}

\subsection{Grouping related anomalies}

We first give an extension of the concept of d-seperation to ASCGL and then show how it can be used to divide the root cause identification problem into many independent subproblems. 

A path is said to be \emph{blocked} by a set of vertices $\mathcal{Z} \in \mathcal{V}$ if it contains a chain $X \rightarrow W \rightarrow Y$ or a fork $X \leftarrow W \rightarrow Y$ and $W \in \mathcal{Z}$, or it contains a collider $X \rightarrow W \leftarrow Y$ such that no descendant of $W$ is in $\mathcal{Z}$.
A path is said to be \emph{active} if it is not blocked. Using blocked paths, the notion of d-separation is defined as follows:

\begin{definition}[d-separation, \cite{Pearl_2000}]
	\label{def:d_sep}
	Given a DAG $\mathcal{G}=(\mathcal{V}, \mathcal{E})$ and disjoint sets $\mathcal{X}, \mathcal{Y}, \mathcal{Z} \subseteq \mathcal{V}$, $\mathcal{X}$ and $\mathcal{Y}$ are \emph{d-separated} by $\mathcal{Z}$ if every path between a vertex in $\mathcal{X}$ and a vertex in $\mathcal{Y}$ is blocked by $\mathcal{Z}$.  We write d-separated as $\mathcal{X} \indep_{\mathcal{G}} \mathcal{Y} \mid \mathcal{Z}$.
\end{definition}

Note that d-separation was introduced for directed acyclic graph (DAG), so it is directly applicable for full time graphs and window causal graphs but not for summary causal graphs. However, it turned out that the extention to ASCGL is simple. 
If there are no loops, d-separation in an ASCGL is equivalent to the one in Definition~\ref{def:d_sep}. For example, in Figure~\ref{fig:summary-graph}, if we omit the loops then it is obvious that $X\indep_{\mathcal{G}} W \mid Z$.  
If there are loops, at first glance, Definition~\ref{def:d_sep} might seem to fail. However, The following proposition shows how Definition~\ref{def:d_sep} can still be used for ASCGL.

\begin{proposition}%[d-separation in ASCGL]
	\label{def:d_sep_summary_causal_graph}
	Given an ASCGL $\mathcal{G}=(\mathcal{V}, \mathcal{E})$ and disjoint sets $\mathcal{X}, \mathcal{Y}, \mathcal{Z} \subseteq \mathcal{V}$, $\mathcal{X}\indep_{\mathcal{G}}\mathcal{Y}\mid \mathcal{Z}$ if $\mathcal{Z}= Parents(\mathcal{X}, \mathcal{G})\cup Parents(\mathcal{Y}, \mathcal{G})$ and $\mathcal{X}\indep_{\mathcal{G}'} \mathcal{Y}\mid \mathcal{Z}'$ such that $\mathcal{G}'$ is a DAG identical to $\mathcal{G}$ but loops are omitted and $\mathcal{Z}'=Parents(\mathcal{X}, \mathcal{G}')\cup Parents(\mathcal{Y}, \mathcal{G}')$. 
\end{proposition}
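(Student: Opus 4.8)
The plan is to fix, first of all, what $\mathcal{X}\indep_{\mathcal{G}}\mathcal{Y}\mid\mathcal{Z}$ should mean for the ASCGL $\mathcal{G}$: since Definition~\ref{def:d_sep} requires a DAG, I would read it as d-separation, in the sense of Definition~\ref{def:d_sep}, of the corresponding vertex sets in the full time causal graph $\mathcal{G}_f$ — a genuine DAG which, via the causal Markov condition, is what controls the conditional independence $\mathcal{X}\perp\mathcal{Y}\mid\mathcal{Z}$ the grouping step actually exploits. Write $\mathcal{X}^f,\mathcal{Y}^f,\mathcal{Z}^f$ for the sets of all time-indexed copies of the series in $\mathcal{X},\mathcal{Y},\mathcal{Z}$. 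A preliminary remark is that $\mathcal{Z}=\mathcal{Z}'$ as sets of series: $\mathcal{Z}'\subseteq\mathcal{Z}$ trivially, and if $W\in\mathcal{Z}$ is a $\mathcal{G}$-parent of some $X\in\mathcal{X}$ then $W\ne X$ (otherwise $X\in\mathcal{Z}\cap\mathcal{X}=\emptyset$ by disjointness), so $W$ is still a $\mathcal{G}'$-parent of $X$ and $W\in\mathcal{Z}'$. Hence there is a single conditioning set to track, and I would prove the contrapositive: if $\mathcal{X}^f$ and $\mathcal{Y}^f$ are d-connected given $\mathcal{Z}^f$ in $\mathcal{G}_f$, then $\mathcal{X}$ and $\mathcal{Y}$ are d-connected given $\mathcal{Z}'$ in $\mathcal{G}'$, contradicting the hypothesis.

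For the construction, pick a shortest active $\mathcal{G}_f$-path $P$ from $\mathcal{X}^f$ to $\mathcal{Y}^f$; minimality forces its interior vertices to project to series outside $\mathcal{X}\cup\mathcal{Y}$. The crucial use of the hypothesis is that $P$ must leave $\mathcal{X}^f$ and reach $\mathcal{Y}^f$ through edges pointing \emph{away} from those blocks: if the edge at the endpoint $X_a\in\mathcal{X}^f$ were $V_s\to X_a$, then the series $V\ne X$ is a $\mathcal{G}$-parent of $X$, hence $V\in\mathcal{Z}$ (it is not in $\mathcal{Y}$, as $Parents_{\mathcal{G}}(\mathcal{X})\subseteq\mathcal{Z}$ is disjoint from $\mathcal{Y}$), so $V_s\in\mathcal{Z}^f$ is a conditioned non-collider adjacent to an endpoint of $P$ and blocks it; in the length-one case $X\to Y$ is already an edge of $\mathcal{G}'$. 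Next I would project $P$ vertexwise onto $\mathcal{G}$: consecutive vertices map to an edge of $\mathcal{G}$ with the same orientation, except that a lagged self-edge $V_c\to V_{c'}$ maps to a repetition of $V$; contracting each maximal run of repetitions yields a walk $\bar P$ in $\mathcal{G}'$ from $\mathcal{X}$ to $\mathcal{Y}$. I would then check that $\bar P$ is active given $\mathcal{Z}$ and invoke the standard reduction of an active walk to an active path in a DAG, producing the contradiction.

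The main obstacle is the activity of $\bar P$, precisely because $\mathcal{G}$ and $\mathcal{G}'$ differ only on the self-edges that generate the contracted runs, so one must rule out that traversing them manufactures a connection absent in $\mathcal{G}'$. The key lemma is that a conditioned series $V\in\mathcal{Z}$ cannot lie on a contracted run of length $\ge 2$: the minimal-time $V$-copy along such a run has all its internal self-edges pointing outward, hence is a non-collider of $P$, and being in $\mathcal{Z}^f$ it would block $P$. Therefore every retained run vertex is unconditioned, which legitimates it as a non-collider of $\bar P$; and when such a vertex $V$ is a collider of $\bar P$, the maximal-time $V$-copy along its run is an opened collider of $P$, whose $\mathcal{G}_f$-descendant in $\mathcal{Z}^f$ projects to a descendant of $V$ in $\mathcal{Z}$ — still a descendant after deleting self-loops, which change no reachability relation — so the collider is open in $\bar P$ as well. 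Length-one runs, images of unconditioned $\mathcal{G}_f$-colliders, and images of conditioned $\mathcal{G}_f$-non-colliders of $P$ are handled by the same bookkeeping, and instantaneous ($\gamma=0$) edges behave exactly like lagged ones. The last ingredient, the classical reduction of an active walk to an active path in a DAG, must still be invoked for $\bar P$; stating it cleanly and the run case-analysis just sketched are the only places demanding genuine care.
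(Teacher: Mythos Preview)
Your proof is correct but takes a genuinely different route from the paper's. The paper works \emph{forward} in the full-time graph: it observes that the parents of $X_{t-\gamma_{xy}}$ in $\mathcal{G}_f$ are exactly the lagged copies of $Parents(X,\mathcal{G}')$ together with the past self-copies $X_{t-\gamma_{xy}-1},\dots,X_{t-\gamma_{xy}-\gamma_{max}}$, so conditioning on this full parent set blocks every path whose first edge is into $X_{t-\gamma_{xy}}$; the symmetric statement at the $Y_t$ endpoint finishes the argument. It is essentially a one-line local-Markov invocation, and nothing is ever projected back to $\mathcal{G}'$. You instead argue the contrapositive, pushing a shortest active $\mathcal{G}_f$-path down to a walk in $\mathcal{G}'$ by contracting self-loop runs and then checking activity vertex by vertex via your run lemma and collider-descendant bookkeeping. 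Your route is longer but buys two things: it makes the role of the hypothesis $\mathcal{X}\indep_{\mathcal{G}'}\mathcal{Y}\mid\mathcal{Z}'$ fully explicit (the paper's proof never visibly invokes it, relying tacitly on the fact that the last non-$Y$ vertex before $Y_t$ is a conditioned parent), and it handles cleanly the case where the path leaves both endpoints through \emph{outgoing} edges, which the paper's ``paths going into'' phrasing leaves for the reader to complete. Conversely, the paper's approach is quicker precisely because it never leaves the full-time DAG and avoids the projection/contraction machinery altogether.
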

\begin{proof}
	Consider a DAG $\mathcal{G}'$ without loops. Suppose $\mathcal{X}, \mathcal{Y} \subseteq \mathcal{V}'$, $\mathcal{Z}^x= Parents(\mathcal{X},\mathcal{G}')$ and $\mathcal{Z}^y= Parents(\mathcal{Y}, \mathcal{G}')$ such that $\mathcal{X}\indep_{\mathcal{G}'}\mathcal{Y}\mid \mathcal{Z}^x\cup \mathcal{Z}^y$. Now consider an identifical graph $\mathcal{G}$ but with loops. $\forall X_{t-\gamma_{xy}}, Y_t$ such that $\gamma_{xy}\in \mathds{N}$, the set $\mathcal{Z}^{x,t}=\mathcal{Z}^x_{t-\gamma_{xy}}\cup \cdots \cup \mathcal{Z}^x_{t-\gamma_{xy}-\gamma_{max}}$ contains all parents of $X_{t-\gamma}$ in $\mathcal{Z}^x$ and none of its decedants, thus $\mathcal{Z}^{x,t}$ blocks all active paths going into $X_{t-\gamma_{xy}}$ that does not pass by the past $X_{t-\gamma_{xy}}$. It follows that $\mathcal{Z}^{x,t}\cup X_{t-\gamma_{xy}-1}, \cdots, X_{t-\gamma_{xy}-\gamma_{max}}$  blocks all active paths going into $X_{t-\gamma_{xy}}$. Similarly, $\mathcal{Z}^y_{t}\cup \cdots \cup \mathcal{Z}^y_{t-\gamma_{max}}\cup Y_{t-1}, \cdots, Y_{t-\gamma_{max}}$ blocks all active paths going into $Y_{t}$. Therefore, $\mathcal{X}\indep_{\mathcal{G}}\mathcal{Y}\mid \mathcal{Z}^x\cup \mathcal{Z}^y$ in $\mathcal{G}$.
\end{proof}
Note that we focused on parents and excluded ancestors to avoid separation sets of infinite size.
For example, in Figure~\ref{fig:summary-graph}, we can explain why $X\nindep_{\mathcal{G}} W \mid Z$ by looking at the compatible full-time causal graph in Figure~\ref{fig:full-graph} where $X_t\nindep_{\mathcal{G}} W_t\mid Z_{t-1}, X_{t-1}, W_{t-1}$.

%For example, in Figure~\ref{fig:summary-graph}, in the presence of loops, $X\nindep_{\mathcal{G}} W \mid Z$ and this can be explained by looking at the compatible full-time causal graph in Figure~\ref{fig:full-graph}. Here, $X_t\nindep_{\mathcal{G}} W_t\mid Z_{t-1}$ since there is still an activated path between $X_t$ and $W_t$ passing by $X_{t-1}, Z_{t-2}, W_{t-1}$. And in general, this is also true if we take any subset of $X$, any subset $W$, and any subset of $Z$. For example, $\{X_{t-\gamma_x}, \cdots, X_t, \cdots,X_{t+\gamma_x}\}$ and $\{W_{t-\gamma_w}, \cdots, W_t, \cdots,W_{t+\gamma_w}\}$ are not d-separated by $\{Z_{t-\gamma_z}, \cdots, Z_t, \cdots,Z_{t+\gamma_z}\}$ because there will always be an activated path between $X_{t-\gamma_x}$ and $W_{t-\gamma_w}$ passing by $X_{t-\gamma_x-1}, Z_{t-\gamma_z-1}, W_{t-\gamma_w-1}$. But the d-separation between $X$ and $W$ can still be found if we condition on the past of $X$ and $W$ given that $Z$ is a parent of either $X$ or $W$.

Assumption~\ref{ass:anomaly_propagation} and Definition~\ref{def:rc} imply that root causes of anomalous vertices $\mathcal{A}$ is a subset of $\mathcal{A}$.
So in the following, we consider that $\mathcal{A}= \mathcal{C}\cup \mathcal{\bar{C}}$. Such that $\mathcal{C}$ represents root causes and $\mathcal{\bar{C}}$ represents non root causes.

\begin{proposition}
	Given an ASCGL $\mathcal{G}=(\mathcal{V}, \mathcal{E})$ and anomalous vertices $\mathcal{A}\subseteq \mathcal{V}$ such that $\mathcal{A}=\mathcal{C}\cup \mathcal{\bar{C}}$, $\forall \mathcal{S} \subseteq \mathcal{V}\backslash\mathcal{A}$, $\mathcal{C}\nindep_{\mathcal{G}} \mathcal{\bar{C}} \mid \mathcal{S}$.
\end{proposition}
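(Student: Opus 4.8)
The plan is to establish non-separation the usual way, by exhibiting for every $\mathcal{S}\subseteq\mathcal{V}\setminus\mathcal{A}$ a single active path between $\mathcal{C}$ and $\bar{\mathcal{C}}$. The natural candidate is a \emph{directed} path that starts at a root cause, ends at a prescribed vertex of $\bar{\mathcal{C}}$, and stays inside $\mathcal{A}$ all the way. A directed path is collider-free, so it is active given $\mathcal{S}$ as soon as none of its vertices lies in $\mathcal{S}$; and since $\mathcal{S}$ is disjoint from $\mathcal{A}$ by hypothesis, a path contained in $\mathcal{A}$ automatically avoids $\mathcal{S}$. Hence everything reduces to constructing such a path. (I assume $\bar{\mathcal{C}}\neq\emptyset$, the only nontrivial case; then $\mathcal{C}\neq\emptyset$ as well, since by Assumption~\ref{ass:anomaly_propagation} and Definition~\ref{def:rc} every anomalous vertex ultimately owes its anomaly to an external intervention, which targets a root cause.)

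The construction is a backward-tracing argument, built on the following lemma: \emph{if $V$ is anomalous and $V\notin\mathcal{C}$, then $V$ has a parent $U\neq V$ in $\mathcal{G}$ that is also anomalous.} Indeed, because $V$ is not a root cause it was not the target of any external intervention, so its causal mechanism and its parent set are the same in the normal and the anomalous regime (here one uses that mechanisms are fixed through time within a regime, and that a non-intervened vertex keeps both its mechanism and its parents); since $V_t$ is nonetheless anomalous, the joint marginal of $Parents(V_t)$ must differ between regimes, i.e. at least one parent of $V$ is anomalous. If the only such parent were $V$ itself, through the loop at $V$, one pushes the argument back in time: a collective anomaly occupies a contiguous block of limited length, so at the first time instant at which $V$ is anomalous its own past instances are still in the normal regime, forcing a genuinely upstream anomalous parent $U\neq V$. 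Iterating the lemma from $V_0=\bar{C}$ yields vertices $V_0,V_1,V_2,\dots$ with $V_{i+1}\in Parents(V_i)$, $V_{i+1}\neq V_i$ and $V_i\in\mathcal{A}$ for all $i$; since $\mathcal{G}$ restricted to distinct vertices is a DAG (an ASCGL has no directed cycle other than loops), following parents can never revisit a vertex, so the sequence is finite, and by the lemma it can only terminate at some $V_k\in\mathcal{C}$. Reversing it gives a directed path $\pi:\,V_k\to V_{k-1}\to\cdots\to V_0$ from a root cause $V_k\in\mathcal{C}$ to $\bar{C}\in\bar{\mathcal{C}}$ whose vertices all lie in $\mathcal{A}$.

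It then remains only to read off d-connection: $\pi$ is a directed path, so it has no collider and all its intermediate vertices are chain vertices; none of them lies in $\mathcal{S}$ because $\pi\subseteq\mathcal{A}$ and $\mathcal{S}\cap\mathcal{A}=\emptyset$, so $\pi$ is not blocked by $\mathcal{S}$. Therefore $\mathcal{C}\nindep_{\mathcal{G}}\bar{\mathcal{C}}\mid\mathcal{S}$, and since $\mathcal{S}$ was arbitrary the proposition follows. One point to address is that d-separation in an ASCGL is not literally Definition~\ref{def:d_sep}; but a directed path between two distinct vertices is unaffected by self-loops, and, exactly as in the proof of Proposition~\ref{def:d_sep_summary_causal_graph}, such a path lifts to an active directed path in a compatible full-time graph, so the conclusion is legitimate in the ASCGL sense.

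I expect the main obstacle to be the lemma, and within it the loop case: one must use carefully the assumptions that a collective anomaly has limited size and that mechanisms are fixed through time within a regime in order to rule out the degenerate scenario where a vertex's anomaly is ``self-sustained'' through its own loop with no upstream anomalous parent — this is precisely what forces the backward tracing to terminate at a genuine root cause rather than wander forever. The remaining bookkeeping (collider-freeness of directed paths, the disjointness of $\mathcal{S}$ and $\mathcal{A}$, finiteness of the traced sequence, and the reduction of ASCGL d-separation to the DAG case) is routine.
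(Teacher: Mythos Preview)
Your proof is correct and takes essentially the same approach as the paper: both hinge on the existence of a directed path from $\mathcal{C}$ to $\bar{\mathcal{C}}$ lying entirely in $\mathcal{A}$, which, being collider-free and disjoint from $\mathcal{S}$, is unblocked by any $\mathcal{S}\subseteq\mathcal{V}\setminus\mathcal{A}$. The paper argues tersely by contradiction and simply invokes Assumption~\ref{ass:anomaly_propagation} to assert such a path exists, whereas you construct it explicitly via a backward-tracing lemma and handle the self-loop case with care --- more detailed, but not materially different.
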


\begin{proof}
	Consider anomalous vertices $\mathcal{A}=\mathcal{C}\cup \mathcal{\bar{C}}$, such that $\mathcal{C}$ is the set of root causes of $\mathcal{\bar{C}}$. If $\exists \mathcal{S}\subseteq \mathcal{V}\backslash\mathcal{A}$ such that $\mathcal{C}\indep_{\mathcal{G}} \mathcal{\bar{C}} \mid \mathcal{S}$ then by definition of d-separation all paths between $\mathcal{C}$ and $\mathcal{\bar{C}}$ are blocked given $\mathcal{S}$ which means all directed paths from $\mathcal{C}$ to $\mathcal{\bar{C}}$ are blocked given $\mathcal{S}$. In this case, $\forall X \in \mathcal{C}, \forall Y \in \mathcal{\bar{C}}$, there exists no directed path $\pi$ from $X$ to $Y$ such that each vertex on $\pi$  belongs to $\mathcal{A}$. It follows that $\mathcal{\bar{C}}$ is not propagated from $\mathcal{C}$ which contradicts Assumption~\ref{ass:anomaly_propagation}. Hence it must be the case that $\forall \mathcal{S} \subseteq \mathcal{V}\backslash\mathcal{A}$, $\mathcal{C}\nindep_{\mathcal{G}} \mathcal{\bar{C}} \mid \mathcal{S}$.
\end{proof}

\begin{definition}[Linked anomalous graph]
	\label{def:LAG}	
	Given an ASCGL $\mathcal{G}=(\mathcal{V}, \mathcal{E})$ and a set of anomalous vertices $\mathcal{A} \subset \mathcal{V}$. $\mathcal{L}=\{\mathcal{L}^1, \cdots, \mathcal{L}^m\}$ is a set of linked anomalous graphs if $\forall i \in\{1,\cdots, m\}$ $\mathcal{L}^i=(\mathcal{A}^i, \mathcal{E}^i)$ is a subgraph of $\mathcal{G}$ such that $\mathcal{A}^i \subset \mathcal{A}$ and there exists a set of vertices $\mathcal{S} \subset \mathcal{V}\backslash\mathcal{A}$ such that $\mathcal{A}^i\indep_{\mathcal{G}} \mathcal{A}\backslash\mathcal{A}^i \mid \mathcal{S}$. 
	%\textcolor{blue}{and $\forall X \in \mathcal{A}^i$, $X \nindep_{\mathcal{G}} \mathcal{A}^i\backslash\{X\}\mid \mathcal{S}$}.
\end{definition}

For example, consider the ASCGL in Figure~\ref{fig:linked_anomalous_graphs}: $B,C,D,W,X,Y,Z$ are anomalous vertices and $A$ is a normal vertex. Since $B,C,D \indep_{\mathcal{G}} W,X,Y,Z\mid A$ and $B,C,D$ and $W,X,Y,Z$ are respectively d-connected given $A$, then $B,C,D$ and $W,X,Y,Z$ form two linked anomalous graphs.

%\textbf{Remark}: Given $\mathcal{G}$ and $\mathcal{A}$, $\mathcal{L}$ is not necessarily unique.

\begin{proposition}
	\label{prop:disjoint_set_of_link_anomalous_graphs}
	Given an ASCGL $\mathcal{G}=(\mathcal{V}, \mathcal{E})$ if the set of linked anomalous graphs is $\mathcal{L}=\{\mathcal{L}^1, \cdots, \mathcal{L}^m\}$, then $\forall i, j \in \{1, \cdots, m\}, \mathcal{L}^i\cap \mathcal{L}^j = \emptyset$.
\end{proposition}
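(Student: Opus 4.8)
The plan is a short proof by contradiction that turns on the fact that each $\mathcal{L}^k$ is a \emph{linked} anomalous graph, i.e.\ (as the example immediately after Definition~\ref{def:LAG} makes explicit) its vertices remain d-connected to one another under conditioning on any set of non-anomalous vertices.

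Suppose, for some $i\neq j$, that $\mathcal{L}^i\cap\mathcal{L}^j\neq\emptyset$. Since each $\mathcal{L}^k$ is the subgraph of $\mathcal{G}$ induced on $\mathcal{A}^k$, this is the same as $\mathcal{A}^i\cap\mathcal{A}^j\neq\emptyset$, and since $\mathcal{L}$ is a set we have $\mathcal{A}^i\neq\mathcal{A}^j$, so one of $\mathcal{A}^i\setminus\mathcal{A}^j$, $\mathcal{A}^j\setminus\mathcal{A}^i$ is nonempty; relabelling if necessary, take $U\in\mathcal{A}^j\setminus\mathcal{A}^i$ and fix any $V\in\mathcal{A}^i\cap\mathcal{A}^j$. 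First I would apply Definition~\ref{def:LAG} to $\mathcal{L}^i$: there is $\mathcal{S}\subseteq\mathcal{V}\setminus\mathcal{A}$ with $\mathcal{A}^i\indep_{\mathcal{G}}\mathcal{A}\setminus\mathcal{A}^i\mid\mathcal{S}$; as $V\in\mathcal{A}^i$ and $U\in\mathcal{A}\setminus\mathcal{A}^i$, d-separation of the two sets entails d-separation of these two individual vertices, so $V\indep_{\mathcal{G}}U\mid\mathcal{S}$, where $\mathcal{S}$ contains no anomalous vertex. Second, since $V$ and $U$ both belong to the linked anomalous graph $\mathcal{L}^j$ and $\mathcal{S}$ is non-anomalous, the ``linked'' property gives $V\nindep_{\mathcal{G}}U\mid\mathcal{S}$. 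These two conclusions contradict each other, hence no such $i\neq j$ exist and $\mathcal{L}^i\cap\mathcal{L}^j=\emptyset$ whenever $i\neq j$.

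I expect the only real difficulty to be pinning down and justifying the ingredient used in the last step: that the vertices inside a single linked anomalous graph stay d-connected under \emph{every} non-anomalous conditioning set, not merely under the particular separating set furnished by Definition~\ref{def:LAG}. Concretely I would read ``the set of linked anomalous graphs'' as the partition of $\mathcal{A}$ into the maximal blocks admitted by Definition~\ref{def:LAG}, and then argue that any bipartition of such a block $\mathcal{B}$ into nonempty parts $\mathcal{W}_1,\mathcal{W}_2$ cannot be d-separated by a non-anomalous set: if $\mathcal{W}_1\indep_{\mathcal{G}}\mathcal{W}_2\mid\mathcal{S}'$ for some $\mathcal{S}'\subseteq\mathcal{V}\setminus\mathcal{A}$, one would combine this with the separator witnessing Definition~\ref{def:LAG} for $\mathcal{B}$ to show that $\mathcal{W}_1\cup(\mathcal{A}\setminus\mathcal{B})$ already satisfies Definition~\ref{def:LAG}, contradicting the maximality of $\mathcal{B}$. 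Handling cleanly the combination of separating sets here — d-separation is not closed under union or intersection of separators in general — is the main obstacle, and is where I would lean on the parent-based characterisation of d-separation in an ASCGL from Proposition~\ref{def:d_sep_summary_causal_graph}, together with the standard composition property of d-separation (pairwise d-separation implies joint d-separation) and a short separate check of the degenerate case in which two overlapping blocks would exhaust $\mathcal{A}$.
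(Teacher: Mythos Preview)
Your proposal is correct but takes a longer road than the paper. The paper's proof stays entirely at the level of the sets $\mathcal{A}^1,\mathcal{A}^2$: if they overlap they are not disjoint, so no conditioning set can d-separate $\mathcal{A}^1$ from $\mathcal{A}^2$ at all (d-separation is only defined for disjoint sets), and hence, by the implicit maximality in ``\emph{the} set of linked anomalous graphs'', the two blocks must coincide. That is the whole argument --- no descent to individual vertices, no need for an internal d-connection property.

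By contrast, you pick $V\in\mathcal{A}^i\cap\mathcal{A}^j$ and $U\in\mathcal{A}^j\setminus\mathcal{A}^i$, obtain $V\indep_{\mathcal{G}}U\mid\mathcal{S}$ from the separator of $\mathcal{L}^i$, and then seek a contradiction from a ``linked'' property of $\mathcal{L}^j$ asserting that no non-anomalous set can d-separate two of its vertices. You are right that this last ingredient is not literally in Definition~\ref{def:LAG} and must be derived from maximality; your second paragraph sketches how, and also correctly names the genuine obstacle (separators do not combine under union/intersection in general). All of that extra work is sound in spirit, but it is work the paper simply does not need: once you accept the maximality reading, the set-level observation ``overlapping $\Rightarrow$ not d-separable $\Rightarrow$ same block'' finishes the proof in one line. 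What your route buys is explicitness about the hidden assumption and a clearer picture of why a single block is internally ``linked''; what the paper's route buys is brevity.
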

\begin{proof}
	Consider two different linked anomalous graphs $\mathcal{L}^1=(\mathcal{A}^1, \mathcal{E}^1)$ and $\mathcal{L}^2=(\mathcal{A}^2, \mathcal{E}^2)$ such that $\mathcal{A}^1\indep_{\mathcal{G}} \mathcal{A}\backslash\mathcal{A}^1 \mid \mathcal{S}^1$ and $\mathcal{A}^2\indep_{\mathcal{G}} \mathcal{A}\backslash\mathcal{A}^2 \mid \mathcal{S}^2$ such that $\mathcal{S}^1, \mathcal{S}^2 \subseteq \mathcal{V}\backslash\mathcal{A}$. 
	It follows that if $\mathcal{L}^1\cap \mathcal{L}^2 \ne \emptyset$ then $\exists X\in\mathcal{V}$ such that $X\in \mathcal{A}^1$ and $X\in \mathcal{A}^2$. In consequence, $\not\exists \mathcal{S}\subseteq\mathcal{V}\backslash\mathcal{A}$ such that  $\mathcal{A}^1~\nindep_{\mathcal{G}} \mathcal{A}^2 \mid \mathcal{S}$. Which means according to Definition~\ref{def:LAG}, $\mathcal{A}^1$ and $\mathcal{A}^2$ belong to the same linked anomalous graph. 
\end{proof}

\begin{proposition}
	\label{prop:modularity_of_link_anomalous_graphs}
	Given an ASCGL $\mathcal{G}=(\mathcal{V}, \mathcal{E})$ if the set of linked anomalous graphs is $\mathcal{L}=\{\mathcal{L}_1, \cdots, \mathcal{L}_m\}$, then $\forall i, j \in \{1, \cdots, m\}, \mathcal{C}_i\cap \mathcal{C}_j = \emptyset$ such that $\mathcal{C}_i$ is the set of root causes of $\mathcal{L}_i$ and   $\mathcal{C}_j$ is the set of root causes of $\mathcal{L}_j$.
\end{proposition}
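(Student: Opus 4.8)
The plan is to deduce this from two ingredients already in place: (i) the general principle, noted just before Definition~\ref{def:LAG}, that the root causes of a collective‑anomaly problem are themselves anomalous vertices, i.e.\ $\mathcal{C}\subseteq\mathcal{A}$, which follows from Assumption~\ref{ass:anomaly_propagation} and Definition~\ref{def:rc}; and (ii) Proposition~\ref{prop:disjoint_set_of_link_anomalous_graphs}, which gives $\mathcal{A}^i\cap\mathcal{A}^j=\emptyset$ for $i\neq j$. If I can show $\mathcal{C}_i\subseteq\mathcal{A}^i$ for every $i$, then $\mathcal{C}_i\cap\mathcal{C}_j\subseteq\mathcal{A}^i\cap\mathcal{A}^j=\emptyset$ and the proposition follows immediately.

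So the real content is the inclusion $\mathcal{C}_i\subseteq\mathcal{A}^i$. I would first apply (i) to the global problem: any $X\in\mathcal{C}_i$ underwent an external intervention that changed the marginal distribution of some vertex in $\mathcal{A}^i\subseteq\mathcal{A}$, hence $X$ is anomalous, $X\in\mathcal{A}$. Suppose toward a contradiction that $X\notin\mathcal{A}^i$, so $X\in\mathcal{A}\backslash\mathcal{A}^i$. Because the change originating at $X$ reaches $\mathcal{A}^i$, Assumption~\ref{ass:anomaly_propagation} forces it to travel through the SCM, i.e.\ there is a directed path $X\to V_1\to\cdots\to V_k\to Y$ in $\mathcal{G}$ with $Y\in\mathcal{A}^i$ whose intermediate vertices $V_1,\dots,V_k$ carry the propagated anomaly and are therefore anomalous, so $V_1,\dots,V_k\in\mathcal{A}$. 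This is the same propagation‑along‑a‑directed‑path reasoning used in the proof of the earlier proposition stating $\mathcal{C}\nindep_{\mathcal{G}}\bar{\mathcal{C}}\mid\mathcal{S}$ for every $\mathcal{S}\subseteq\mathcal{V}\backslash\mathcal{A}$.

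Now take the set $\mathcal{S}\subseteq\mathcal{V}\backslash\mathcal{A}$ that witnesses $\mathcal{A}^i\indep_{\mathcal{G}}\mathcal{A}\backslash\mathcal{A}^i\mid\mathcal{S}$ in Definition~\ref{def:LAG}. The path $X\to V_1\to\cdots\to V_k\to Y$ is directed, so it contains no collider, and none of its vertices lies in $\mathcal{S}$, since all of them lie in $\mathcal{A}$ while $\mathcal{S}\cap\mathcal{A}=\emptyset$; hence the path is active given $\mathcal{S}$ and it joins $X\in\mathcal{A}\backslash\mathcal{A}^i$ to $Y\in\mathcal{A}^i$. This contradicts $\mathcal{A}^i\indep_{\mathcal{G}}\mathcal{A}\backslash\mathcal{A}^i\mid\mathcal{S}$, so in fact $X\in\mathcal{A}^i$, proving $\mathcal{C}_i\subseteq\mathcal{A}^i$; combining with Proposition~\ref{prop:disjoint_set_of_link_anomalous_graphs} finishes the argument.

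I expect the main obstacle to be exactly the step in the second paragraph: arguing rigorously that ``the anomaly induced at $X$ affects $\mathcal{A}^i$'' produces a \emph{directed} path from $X$ into $\mathcal{A}^i$ all of whose internal vertices are anomalous. That is where Assumption~\ref{ass:anomaly_propagation}, together with reading ``propagation through the SCM'' as propagation along directed edges of $\mathcal{G}$ consistently with Definition~\ref{defCausalGraph}, does the heavy lifting; once that path is in hand, the d‑separation bookkeeping and the appeal to Proposition~\ref{prop:disjoint_set_of_link_anomalous_graphs} are routine. If instead one adopts the more literal reading in which $\mathcal{C}_i$ is by definition the set of root causes computed \emph{within} the subgraph $\mathcal{L}^i=(\mathcal{A}^i,\mathcal{E}^i)$, then $\mathcal{C}_i\subseteq\mathcal{A}^i$ is trivial and only Proposition~\ref{prop:disjoint_set_of_link_anomalous_graphs} is needed; but I would give the non‑trivial version, since that is what makes the stated modularity meaningful.
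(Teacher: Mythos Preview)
Your proposal is correct and follows essentially the same route as the paper, which proves the proposition in one line: ``This follows from Proposition~\ref{prop:disjoint_set_of_link_anomalous_graphs}.'' The paper implicitly adopts what you call the ``literal reading'' in which $\mathcal{C}_i\subseteq\mathcal{A}^i$ is immediate, so your additional d-separation argument for that inclusion is a strict expansion of the paper's proof rather than a different approach.
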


\begin{proof}
	This follows from Proposition~\ref{prop:disjoint_set_of_link_anomalous_graphs}.
\end{proof}

Propositions~\ref{prop:disjoint_set_of_link_anomalous_graphs}~ and~\ref{prop:modularity_of_link_anomalous_graphs} suggest that linked anomalous graphs are modular with respect to each other, which implies that
the set of root causes of each linked anomalous graph can be identified independently of the rest of the anomalies in the graph.

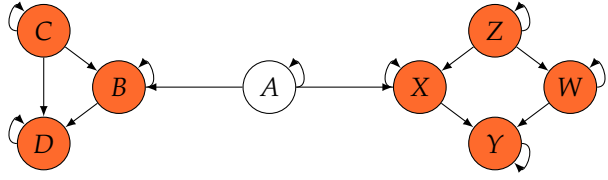
\begin{figure}
	\centering
	\begin{tikzpicture}[{black, circle, draw, inner sep=0}]
	\tikzset{nodes={draw,rounded corners},minimum height=0.7cm,minimum width=0.7cm}
	\tikzset{anomalous/.append style={fill=easyorange}}
	
	\node[anomalous] (X) at (0,0) {$X$} ;
	\node[anomalous] (Z) at (1,0.75) {$Z$};
	\node[anomalous] (Y) at (2,0) {$W$};
	\node[anomalous] (W) at (1,-0.75) {$Y$};
	
	\node (A) at (-2,0) {$A$};
	\node[anomalous] (B) at (-4,0) {$B$};
	\node[anomalous] (C) at (-5,0.75) {$C$};
	\node[anomalous] (D) at (-5,-0.75) {$D$};
	
	\draw[->,>=latex] (Y) to [out=0,in=45, looseness=2] (Y);
	\draw[->,>=latex] (Z) to [out=0,in=45, looseness=2] (Z);
	\draw[->,>=latex] (X) to [out=180,in=135, looseness=2] (X);
	\draw[->,>=latex] (W) to [out=0,in=-45, looseness=2] (W);
	
	\draw[->,>=latex] ( Z) -- (X);
	\draw[->,>=latex] (Z) -- (Y);	
	\draw[<-,>=latex] (W) -- (X);
	\draw[<-,>=latex] (W) -- (Y);
	
	\draw[->,>=latex] (A) -- (X);	
	\draw[->,>=latex] (A) -- (B);	
	\draw[->,>=latex] (C) -- (B);	
	\draw[->,>=latex] (B) -- (D);	
	\draw[->,>=latex] (C) -- (D);	
	
	\draw[->,>=latex] (A) to [out=0,in=45, looseness=2] (A);
	\draw[->,>=latex] (B) to [out=0,in=45, looseness=2] (B);
	\draw[->,>=latex] (C) to [out=180,in=135, looseness=2] (C);
	\draw[->,>=latex] (D) to [out=180,in=135, looseness=2] (D);
	
	\end{tikzpicture}
	\caption{An ASCGL with two linked anomalous graphs. White vertices represents normal vertices and orange vertices represents anomalous vertices.}
	\label{fig:linked_anomalous_graphs}
\end{figure}

Next, we will show how to detect a subset of the root causes uniquely by looking at the graph and the time of the first appearance of anomalies on each vertex.

\subsection{Identifying root causes from the graph}\label{subsec:rc_from_graph}

\begin{definition}[Sub-root vertex]
	\label{def:sub_root}
	A sub-root vertex is root vertex in a linked anomalous graph. 
\end{definition}

\begin{definition}[Time defying vertex]
	\label{def:time_defying}
	Consider a linked anomalous graph $\mathcal{L}^i=(\mathcal{A}^i,\mathcal{E}^i)$. $Y$ is a time defying vertex if and only if $\forall X \in Parents(Y, \mathcal{L}^i)$ the appearance time of the anomaly on $Y$ precedes the appearance time of the anomaly on $X$.
\end{definition}

\begin{proposition}
	\label{prop:roots_sub_root_time_defying}
	Given a linked anomalous graph $\mathcal{L}^i=(\mathcal{A}^i,\mathcal{E}^i)$, its set of sub-root vertices $\mathcal{R}^i$, and its set of time defying vertices $\mathcal{T}^i$, then $\mathcal{R}^i\cup \mathcal{T}^i \subseteq \mathcal{C}^i$ such that $\mathcal{C}^i$ is the true set of root causes in $\mathcal{L}^i$.
\end{proposition}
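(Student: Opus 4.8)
The plan is to derive both inclusions $\mathcal{R}^i\subseteq\mathcal{C}^i$ and $\mathcal{T}^i\subseteq\mathcal{C}^i$ from a single argument by contradiction. Fix an arbitrary $Y\in\mathcal{R}^i\cup\mathcal{T}^i$ and suppose, for contradiction, that $Y\notin\mathcal{C}^i$. Since $Y\in\mathcal{A}^i$ and $Y$ is therefore anomalous, Definition~\ref{def:rc} forces $Y$ to have undergone no external intervention (otherwise the intervention on $Y$ would itself be one that changed a marginal in $\mathcal{A}^i$, putting $Y$ in $\mathcal{C}^i$). I then want to show that Assumption~\ref{ass:anomaly_propagation} together with the structure of $\mathcal{L}^i$ and of the ASCGL makes this impossible.

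I would set up two ingredients. First, \emph{any anomalous parent of $Y$ in $\mathcal{G}$ already lies in $\mathcal{A}^i$}: if $X\in Parents(Y,\mathcal{G})$ is anomalous but $X\in\mathcal{A}\backslash\mathcal{A}^i$, then the single arc $X\to Y$ is an active path between $\mathcal{A}^i$ and $\mathcal{A}\backslash\mathcal{A}^i$ that no $\mathcal{S}\subseteq\mathcal{V}\backslash\mathcal{A}$ can block (a direct edge has no intermediate vertex and no collider, and $X,Y\notin\mathcal{S}$), contradicting $\mathcal{A}^i\indep_{\mathcal{G}}\mathcal{A}\backslash\mathcal{A}^i\mid\mathcal{S}$ from Definition~\ref{def:LAG}; as $\mathcal{L}^i$ is the subgraph of $\mathcal{G}$ on $\mathcal{A}^i$, such an $X$ is then a parent of $Y$ in $\mathcal{L}^i$. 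Second, since $Y$ underwent no intervention, its mechanism and noise are the normal-regime ones, so by Assumption~\ref{ass:anomaly_propagation} the anomaly on $Y$ is inherited through the SCM: at the first instant $t_Y$ at which the anomaly appears on $Y$, some parent $X_{t_Y-\gamma}$ of $Y_{t_Y}$ in the full-time graph is already anomalous. Because self-loops of an ASCGL carry a strictly positive lag, the self-parent $Y_{t_Y-\gamma}$ is still normal at $t_Y$, so this parent satisfies $X\neq Y$; and $X_{t_Y-\gamma}$ being anomalous means the first appearance time $t_X$ of the anomaly on $X$ obeys $t_X\le t_Y-\gamma\le t_Y$ (using $\gamma\ge0$ for $X\neq Y$). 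By the first ingredient, $X\in Parents(Y,\mathcal{L}^i)$.

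Now I conclude in the two cases. If $Y\in\mathcal{R}^i$ is a sub-root vertex (Definition~\ref{def:sub_root}), $Y$ has no parent in $\mathcal{L}^i$ other than possibly itself, contradicting the existence of the parent $X\neq Y$. If $Y\in\mathcal{T}^i$ is time defying (Definition~\ref{def:time_defying}), then $t_Y$ strictly precedes $t_X$ for every parent $X$ of $Y$ in $\mathcal{L}^i$, contradicting $t_X\le t_Y$. Either way the assumption $Y\notin\mathcal{C}^i$ is untenable, so $Y\in\mathcal{C}^i$; since $Y$ was arbitrary, $\mathcal{R}^i\cup\mathcal{T}^i\subseteq\mathcal{C}^i$.

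The step I expect to be the main obstacle is the second ingredient: turning Assumption~\ref{ass:anomaly_propagation} into the precise claim that, when $Y$ is not a root cause, its \emph{first} anomalous instant must be inherited from a \emph{non-self} anomalous parent whose own first anomalous instant is no later. This needs the facts that mechanisms are fixed within a regime (so no intervention on $Y$ really does pin down $f^y,\xi^y$ to the normal ones), and it is cleanest when phrased on the window/full-time graph obtained by unrolling the ASCGL under Assumption~\ref{def:consistence-time}, where "parent", "lag $\gamma\ge0$", and "self-loops have lag $>0$" have their literal meaning. One should also note the degenerate case of a time-defying vertex with \emph{no} parents in $\mathcal{L}^i$: it is then vacuously time defying and is also a sub-root, so the sub-root branch of the argument already covers it.
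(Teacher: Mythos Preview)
Your argument is correct and rests on the same two ideas as the paper's proof: a sub-root has no anomalous parent to inherit the anomaly from, and a time-defying vertex became anomalous strictly before any of its anomalous parents, so in both cases Assumption~\ref{ass:anomaly_propagation} forces a direct external intervention. The organization, however, differs in two ways worth noting. First, the paper handles the time-defying case by an induction on the number of parents of $Y$ in $\mathcal{L}^i$, whereas you bypass induction entirely: you exhibit a single anomalous parent $X\in Parents(Y,\mathcal{L}^i)$ with $t_X\le t_Y$ and observe this already contradicts Definition~\ref{def:time_defying}, which quantifies over \emph{all} parents. This is shorter and arguably cleaner. Second, to rule out that the anomaly on $Y$ was propagated from an anomalous parent lying in a \emph{different} linked anomalous graph, the paper simply invokes Propositions~\ref{prop:disjoint_set_of_link_anomalous_graphs} and~\ref{prop:modularity_of_link_anomalous_graphs}; you instead give a self-contained d-separation argument (a direct edge $X\to Y$ with $X\in\mathcal{A}\setminus\mathcal{A}^i$ cannot be blocked by any $\mathcal{S}\subseteq\mathcal{V}\setminus\mathcal{A}$, violating Definition~\ref{def:LAG}). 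Your route is slightly more elementary and makes the dependence on the LAG structure explicit; the paper's route is terser but leans on earlier propositions whose relevance is left implicit. Your caveat about the ``second ingredient'' is apt: both proofs tacitly assume that a non-intervened $Y$ with all parents normal at time $t$ is itself normal at $t$, which is exactly the intended content of Assumption~\ref{ass:anomaly_propagation} together with fixed mechanisms within a regime.
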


\begin{proof}
	Consider a linked anomalous graph $\mathcal{L}^i=(\mathcal{A}^i,\mathcal{E}^i)$.
	1) By Definition~\ref{def:sub_root}, a sub-root vertex in $\mathcal{L}^i$ does not have any anomalous parent, it follows from Assumption~\ref{ass:anomaly_propagation} and Propositions~\ref{prop:disjoint_set_of_link_anomalous_graphs},\ref{prop:modularity_of_link_anomalous_graphs} that the anomaly in this vertex cannot be propagated from other vertices, which implies that it was itself directly affected by an external intervention which means it is a root cause.
	2) Consider two anomalous vertices $X,Y \in \mathcal{A}^i$ such that $Parents(Y, \mathcal{L}^i) = \{X\}$. According to Assumption~\ref{ass:anomaly_propagation}, the anomaly in $X$ which appeared at time $t$ was propagated to $Y$ according to the lag $\gamma\ge 0$ between $X$ and $Y$ of the SCM. 
	It follows that if the appearance time of the anomaly on $Y$ is $t'$ such that $t'<t\le t+ \gamma$ then the anomaly on $Y$ was not propagated from $X$, which implies that $Y$ was itself directly affected by an external intervention which means it is a root cause.
	Using induction, suppose this is true for $Parents(Y, \mathcal{L}^i)=\{X^1, \cdots, X^p\}$.
	If $Parents(Y, \mathcal{L}^i)=\{X^1, \cdots, X^{p+1}\}$ such that $t'$ precedes the appearance time of the anomalies on $\{X^1, \cdots, X^{p}\}$ then if $t'$ does not precede the appearance time on $X^{p+1}$, then the anomaly on $Y$ could have been propagated from $X^{p+1}$, otherwise, we conclude that $Y$ was directly affected by an external intervention which means it is a root cause.
	%However, this is not true if there exists another vertex that causes $Y$ in $\mathcal{L}^i$ since in that case the appearance of $Y$ would no longer only dependents on $X$.
\end{proof}

Proposition~\ref{prop:roots_sub_root_time_defying} states that if a vertex is a sub-root or a time-defying vertex in a linked anomalous graph then it belongs to the set of root causes. However, this does not mean that every element in the set of root causes is necessarily a sub-root or a time-defying vertex. 
Therefore, there might be a vertex $X \in \mathcal{A}^i\backslash\{\mathcal{R}^i\cup  \mathcal{T}^i\}$, such that $X \in \mathcal{C}^i$.
%For an illustration, consider 

\subsection{Identifying root causes from data}\label{subsec:rc_from_graph_data}

To find root causes that are neither sub-roots nor time-defying vertices, we search for changes in the causal mechanisms.
In some cases, we can find these changes, for each $X\rightarrow Y$ in a given linked anomalous graph, by estimating the total effect\footnote{The \emph{do} operator  represents an external intervention.} \citep{Pearl_2000} 
%$\Pr_N(Y_t\mid do(X_{t-\gamma_{xy}}))$ 
of $X$ on $Y$ in the normal regime, defined as
\begin{align}
TE^{{N}}_{X_{t-\gamma_{xy}}\rightarrow Y_t} =& E_{N}[Y_t|do(X_{t-\gamma_{xy}}=x)]  \\
&- E_{N}[Y_t|do(X_{t-\gamma_{xy}}=x')] \nonumber
\end{align}
and the total effect 
%$\Pr_{\bar{N}}(Y_t\mid do(X_{t-\gamma_{xy}}))$ 
of $X$ on $Y$ in the anomalous regime, defined as
\begin{align}
TE^{\bar{N}}_{X_{t-\gamma_{xy}}\rightarrow Y_t} =& E_{\bar{N}}[Y_t|do(X_{t-\gamma_{xy}}=x)]  \\
&- E_{\bar{N}}[Y_t|do(X_{t-\gamma_{xy}}=x')],\nonumber
\end{align} 
where $E_{N}$ and $E_{\bar{N}}$ are respectivly the expectations in the normal regime and in the anomalous regime and the temporal lag $\gamma_{xy}$ is between $\bar\gamma_{xy}$ and $\gamma_{max}$ such that $\bar\gamma_{xy}$  can be found by the substracting the time of appearance of anomalies on $Y$ from the time of appearance of anomalies on $X$.
If there is no directed path from $X$ to $Y$ (other than $X\rightarrow Y$) or if directed paths between $X$ and $Y$ exist but we know that all vertices (other than $X$ and $Y$) on these paths cannot be root causes, then if $TE^{{N}}_{X_{t-\gamma_{xy}}\rightarrow Y_t} \ne TE^{\bar{N}}_{X_{t-\gamma_{xy}}\rightarrow Y_t}$ we can conclude that there is a change in the causal mechanism of $Y$ provoked by an external intervention on $Y$.

In order to estimate $TE^{N}_{X_{t-\gamma_{xy}}\rightarrow Y_t}$ and $TE^{\bar{N}}_{X_{t-\gamma_{xy}}\rightarrow Y_t}$ from observational data we need to eliminate the \emph{do} from the total effect expression.
This can be achieved, when there is no hidden common causes, by the back-door criterion \citep{Pearl_2000} which searches for an adjustment set of vertices, called back-door set, that eliminates all spurious correlations between $X$ and $Y$. However, the back-door criterion cannot directly be applied to ASCGL because of loops. 
In the following we present an adjustment set for identifying total effects\footnote{A similar result was presented in \citep{Eichler_2007} for summary causal graphs assuming there is no instantaneous relations and allowing for sets of infinite size.} from ASCGLs:

\begin{definition}[An adjustment set for total effects in an ASCGL]
	\label{def:back_door_summarygraph}
	Consider an ASCGL $\mathcal{G}=(\mathcal{V}, \mathcal{E})$, a maximal lag $\gamma_{max}$, two vertices $X$ and $Y$ such that $X\rightarrow Y$ in $\mathcal{G}$, and the temporal lag $\gamma_{xy}$ between $X$ and $Y$.
	The adjustment set for identifying the total effect relative to ($X_{t-\gamma_{xy}}, Y_t$) is $\mathcal{B}_{t-\gamma_{xy}}\cup \cdots\cup \mathcal{B}_{t-\gamma_{xy}-\gamma_{max}}\cup \mathcal{X}$
	such that: 
	\begin{enumerate*}
		\item $\mathcal{B} = Parents(X, \mathcal{G})\backslash\{X\}$;
		\item $\mathcal{X}=\{X_{t-\gamma_{xy}-1}, \cdots, X_{t-\gamma_{xy}-\gamma_{max}}\}$ if there exists a loop on $X$ in $\mathcal{G}$, otherwise $\mathcal{X}=\{\emptyset\}$.
	\end{enumerate*}
\end{definition}

\begin{proposition}
	\label{prop:backdoor}
	Given an ASCGL $\mathcal{G}=(\mathcal{V}, \mathcal{E})$ and a maximal lag $\gamma_{max}$, if $\mathcal{B}_{t-\gamma_{xy}}\cup \cdots\cup \mathcal{B}_{t-\gamma_{xy}-\gamma_{max}}\cup \mathcal{X}$ satisfies Definition~\ref{def:back_door_summarygraph} in $\mathcal{G}$ relative to ($X_{t-\gamma_{xy}}, Y_t$) then $\mathcal{B}_{t-\gamma_{xy}}\cup \cdots\cup \mathcal{B}_{t-\gamma_{xy}-\gamma_{max}}\cup \mathcal{X}$ blocks all activated paths between $X_{t-\gamma_{xy}}$ and $Y_t$ going into $X_{t-\gamma_{xy}}$ in every window causal graph associated with $\mathcal{G}$.
\end{proposition}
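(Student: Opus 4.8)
The plan is to reduce the statement to a single structural fact: in every window causal graph $\mathcal{G}_w$ associated with $\mathcal{G}$, the set of parents of $X_{t-\gamma_{xy}}$ is contained in the adjustment set $\mathcal{Z}=\mathcal{B}_{t-\gamma_{xy}}\cup \cdots\cup \mathcal{B}_{t-\gamma_{xy}-\gamma_{max}}\cup \mathcal{X}$. Once this is established, the blocking claim follows immediately. Fix any $\mathcal{G}_w$ and any path $\pi$ between $X_{t-\gamma_{xy}}$ and $Y_t$ that is activated given $\mathcal{Z}$ and that goes into $X_{t-\gamma_{xy}}$; let $P$ be the vertex adjacent to $X_{t-\gamma_{xy}}$ on $\pi$. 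Since $\pi$ goes into $X_{t-\gamma_{xy}}$, the edge of $\pi$ incident to $X_{t-\gamma_{xy}}$ is $P\rightarrow X_{t-\gamma_{xy}}$, so $P$ is a parent of $X_{t-\gamma_{xy}}$ in $\mathcal{G}_w$ and hence $P\in\mathcal{Z}$. Moreover, because this edge points away from $P$, the vertex $P$ cannot be a collider on $\pi$; it is therefore a non-collider lying in the conditioning set, so $\pi$ contains a blocked chain or fork at $P$ and is blocked, contradicting the assumption that $\pi$ is activated. Thus no activated path into $X_{t-\gamma_{xy}}$ survives conditioning on $\mathcal{Z}$.

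The remaining work is to prove the structural fact, i.e.\ to enumerate the parents of $X_{t-\gamma_{xy}}$ in an arbitrary $\mathcal{G}_w$. By Definition~\ref{defCausalGraph} together with Assumption~\ref{def:consistence-time}, every edge of $\mathcal{G}_w$ has the form $A_{t'-\gamma}\rightarrow B_{t'}$ with $0\le\gamma\le\gamma_{max}$ whenever $A\rightarrow B$ is an arc of $\mathcal{G}$ with $A\ne B$, and with $1\le\gamma\le\gamma_{max}$ whenever the arc is the loop $A\rightarrow A$. Consequently a parent of $X_{t-\gamma_{xy}}$ in $\mathcal{G}_w$ is either (i) of the form $A_{t-\gamma_{xy}-\gamma}$ for some $A\in Parents(X,\mathcal{G})\backslash\{X\}=\mathcal{B}$ and some $0\le\gamma\le\gamma_{max}$, which lies in $\mathcal{B}_{t-\gamma_{xy}}\cup\cdots\cup\mathcal{B}_{t-\gamma_{xy}-\gamma_{max}}$; or (ii) of the form $X_{t-\gamma_{xy}-\gamma}$ for some $1\le\gamma\le\gamma_{max}$, which can occur only if $\mathcal{G}$ contains a loop on $X$, in which case it lies in $\mathcal{X}=\{X_{t-\gamma_{xy}-1},\dots,X_{t-\gamma_{xy}-\gamma_{max}}\}$. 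In both cases the parent belongs to $\mathcal{Z}$, which is precisely the claim. It is also worth noting that $X_{t-\gamma_{xy}}$ itself never belongs to $\mathcal{Z}$, since $\mathcal{B}$ excludes $X$ and $\mathcal{X}$ contains only strictly earlier copies of $X$; so the argument genuinely conditions on a back-door-type set rather than accidentally on $X_{t-\gamma_{xy}}$.

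I expect the main obstacle to be bookkeeping rather than anything conceptual: one has to be scrupulous about the summary-graph/window-graph translation — that every admissible lag is bounded by $\gamma_{max}$ (so the finite union over offsets $0,\dots,\gamma_{max}$ really captures all temporal instantiations of a given summary arc), that instantaneous ($\gamma=0$) arcs from genuine parents are retained (hence the need for the $\mathcal{B}_{t-\gamma_{xy}}$ term with zero offset), and that loops contribute only strictly lagged copies of $X$ (hence $\mathcal{X}$ starting at offset $1$). A secondary point to make explicit is why it suffices to inspect only the first edge of $\pi$: the proposition restricts attention to paths that \emph{go into} $X_{t-\gamma_{xy}}$, so the $X$-end edge is incoming by hypothesis, and the block at $P$ terminates the argument regardless of how $\pi$ behaves afterward (in particular, regardless of whether it later revisits other copies of $X$ or of any $\mathcal{B}$-vertex).
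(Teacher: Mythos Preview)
Your proof is correct and follows essentially the same approach as the paper's proof sketch: both argue that the adjustment set contains every parent of $X_{t-\gamma_{xy}}$ in any compatible window graph, and hence blocks every incoming path at its first step. Your version is in fact tidier than the paper's sketch --- you make explicit why the first vertex $P$ on any back-door path is necessarily a non-collider in $\mathcal{Z}$ (so the block is immediate regardless of what the path does downstream), whereas the paper's sketch additionally remarks that $\mathcal{Z}$ contains no descendants of $X_{t-\gamma_{xy}}$, a back-door condition that goes slightly beyond what the proposition as stated actually requires.
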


\begin{sproof}
	If $X$ and $Y$ have no loops in $\mathcal{G}$, then $\mathcal{B}_{t-\gamma_{xy}}\cup \cdots\cup \mathcal{B}_{t-\gamma_{xy}-\gamma_{max}}$ is sufficient to block all paths between $X_{t-\gamma_{xy}}$ and $Y_t$ going into $X_{t-\gamma_{xy}}$ since  all possible parents of $X_{t-\gamma_{xy}}$ are in $\mathcal{B}_{t-\gamma_{xy}}\cup \cdots\cup \mathcal{B}_{t-\gamma_{xy}-\gamma_{max}}$. Given that $\mathcal{G}$ is acyclic, $\mathcal{B}_{t-\gamma_{xy}}\cup \cdots\cup \mathcal{B}_{t-\gamma_{xy}-\gamma_{max}}$ cannot block any directed path nor create any new activated path between $X_{t-\gamma_{xy}}$ and $Y_t$ as there cannot be any descendant of $X_{t-\gamma_{xy}}$ in $\mathcal{B}_{t-\gamma_{xy}}\cup \cdots\cup \mathcal{B}_{t-\gamma_{xy}-\gamma_{max}}$. 
	If $X$ and $Y$ have loops then adjusting on $\mathcal{B}$ up to $\gamma_{max}$ cannot block all back-door paths because there will always be an activated path passing by $\mathcal{B}_{t-\gamma_{xy}-\gamma_{max}-i}$ such that $i>0$. The only way to block this path is to add the past of $X_{t-\gamma_{xy}}$ to the adjustment set.
\end{sproof}

When there exist directed path between $X$ and $Y$ the total effect would no longer be reliable to detect external interventions. For example, in Figure~\ref{fig:linked_anomalous_graphs}, if there's an external intervention on $B$ then $TE^{N}_{C_{t-\gamma_{cd}}\rightarrow D_t} \ne TE^{\bar{N}}_{C_{t-\gamma_{cd}}\rightarrow D_t}$ due to the change in the causal mechanism of the mediator $B$ of $C$ and $D$.
To avoid such cases, for each $X\rightarrow Y$ in a given linked anomalous graph, we need to estimate the direct effect \citep{Pearl_2000} 
%$\Pr_N(Y_t\mid do(X_{t-\gamma_{xy}}))$ 
of $X$ on $Y$ in the normal regime, defined as
\begin{align}
DE^{{N}}_{X_{t-\gamma_{xy}}\rightarrow Y_t} =& E_{N}[Y_t|do(X_{t-\gamma_{xy}}=x, \mathcal{W}=\textit{w})]  \\
&- E_{N}[Y_t|do(X_{t-\gamma_{xy}}=x', \mathcal{W}=\textit{w})]\nonumber
\end{align}
and the direct effect 
%$\Pr_{\bar{N}}(Y_t\mid do(X_{t-\gamma_{xy}}))$ 
of $X$ on $Y$ in the anomalous regime, defined as
\begin{align}
DE^{\bar{N}}_{X_{t-\gamma_{xy}}\rightarrow Y_t} =& E_{\bar{N}}[Y_t|do(X_{t-\gamma_{xy}}=x, \mathcal{W}=\textit{w})]  \\
&- E_{\bar{N}}[Y_t|do(X_{t-\gamma_{xy}}=x', \mathcal{W}=\textit{w})], \nonumber
\end{align}
where $\mathcal{W}= \mathcal{V}_{t-\gamma_{max}}\cup\cdots\cup\mathcal{V}_{t-\gamma_{xy}}\backslash\{X_{t-\gamma_{xy}}\}\cup\cdots\cup\mathcal{V}_{t}\backslash\{Y_{t}\}$.

%By assuming linear systems, 
Assuming linearity, the \textit{do} from the direct effect expression can be eliminated (i.e., direct effect can be identifyed) using any adjustment set given by the single-door criterion \citep{Pearl_2000} which is not applicable in ASCGLs. In the following, we present an adjustment set for direct effects in ASCGLs:

\begin{definition}[An adjustment set for direct effects in an ASCGL]
	\label{def:single_door_summarygraph}
	Consider an ASCGL $\mathcal{G}=(\mathcal{V}, \mathcal{E})$, a maximal lag $\gamma_{max}$, two vertices $X$ and $Y$ such that $X\rightarrow Y$ in $\mathcal{G}$, and the temporal lag $\gamma_{xy}$ between $X$ and $Y$.
	An adjustment set for identifying the direct effect relative to ($X_{t-\gamma_{xy}}, Y_t$) is $\mathcal{B}_{t}\cup \cdots\cup \mathcal{B}_{t-\gamma_{max}}\cup \mathcal{X}\cup \mathcal{Y}$
	such that:
	\begin{enumerate*} 
		\item $\mathcal{B}= Parents(Y,\mathcal{G})\backslash\{X, Y\}$;
		\item $\mathcal{X}=\{X_t, \cdots, X_{t-\gamma_{max}}\}\backslash\{X_{t-\gamma_{xy}}\}$ and $\mathcal{Y}=\{Y_{t-1}, \cdots, Y_{t-\gamma_{max}}\}$ if there exists a loop on $Y$ in $\mathcal{G}$, otherwise $\mathcal{Y}=\{\emptyset\}$.
	\end{enumerate*}
\end{definition}

\begin{proposition}
	\label{prop:singledoor}
	Given an ASCGL $\mathcal{G}=(\mathcal{V}, \mathcal{E})$ and a maximal lag $\gamma_{max}$, 
	if $\mathcal{B}_{t}\cup \cdots\cup \mathcal{B}_{t-\gamma_{max}}\cup \mathcal{X}\cup \mathcal{Y}$ satisfies Definition~\ref{def:single_door_summarygraph} in $\mathcal{G}$ relative to ($X_{t-\gamma_{xy}}, Y_t$) then $\mathcal{B}_{t}\cup \cdots\cup \mathcal{B}_{t-\gamma_{max}}\cup \mathcal{X}\cup \mathcal{Y}$ blocks all activated paths between $X_{t-\gamma_{xy}}$ and $Y_t$ in every window causal graph associated with $\mathcal{G}$ except the direct path $X_{t-\gamma_{xy}}\rightarrow Y_t$.
\end{proposition}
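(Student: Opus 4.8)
The plan is to mirror the argument behind Proposition~\ref{prop:backdoor} and reduce the claim to a single d-separation statement in an ordinary DAG. Fix an arbitrary window causal graph $\mathcal{G}_w$ associated with $\mathcal{G}$, let $\mathcal{G}_w^{-}$ be the DAG obtained from $\mathcal{G}_w$ by deleting the edge $X_{t-\gamma_{xy}}\to Y_t$, and write $\mathcal{Z}=\mathcal{B}_{t}\cup\cdots\cup\mathcal{B}_{t-\gamma_{max}}\cup\mathcal{X}\cup\mathcal{Y}$ for the proposed adjustment set. Any path from $X_{t-\gamma_{xy}}$ to $Y_t$ in $\mathcal{G}_w$ that uses the edge $X_{t-\gamma_{xy}}\to Y_t$ must be the length-one path itself, since a longer such path would have to revisit one of its endpoints; hence the assertion that $\mathcal{Z}$ blocks every activated path between $X_{t-\gamma_{xy}}$ and $Y_t$ other than the direct one is equivalent to the assertion that $\mathcal{Z}$ d-separates $X_{t-\gamma_{xy}}$ from $Y_t$ in $\mathcal{G}_w^{-}$, and this is what I would prove.

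Before the case analysis I would establish two facts about $\mathcal{Z}$. First, $\mathcal{Z}$ contains every parent of $Y_t$ in $\mathcal{G}_w^{-}$: by definition of an associated window graph and of $\gamma_{max}$, a parent of $Y_t$ has the form $V_{t-k}$ with $V\in Parents(Y,\mathcal{G})$ and $0\le k\le\gamma_{max}$ (with $k\ge1$ when $V=Y$), and such a vertex lies in $\mathcal{B}_{t-k}$ when $V\notin\{X,Y\}$, in $\mathcal{X}$ when $V=X$ and $k\ne\gamma_{xy}$ (the case $k=\gamma_{xy}$ being the deleted edge), and in $\mathcal{Y}$ when $V=Y$ (which forces a loop on $Y$, so $\mathcal{Y}$ is the full past window of $Y$). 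Second, $\mathcal{Z}$ contains no descendant of $Y_t$ in $\mathcal{G}_w$, hence none in $\mathcal{G}_w^{-}$ since deleting an edge only shrinks descendant sets: if some $B_{t-k}\in\mathcal{B}_{t-k}$ or $X_{t-k}\in\mathcal{X}$ were a descendant of $Y_t$, then $\mathcal{G}$ would contain a directed path from $Y$ to $B$ (resp.\ to $X$) which, together with the arc $B\to Y$ (resp.\ $X\to Y$), forms a directed cycle of $\mathcal{G}$ that is not a self-loop, contradicting acyclicity of the ASCGL; and every vertex of $\mathcal{Y}=\{Y_{t-1},\dots,Y_{t-\gamma_{max}}\}$ lies strictly in the past of $Y_t$, whereas any descendant of $Y_t$ occurs at a time $\ge t$.

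Then I would take an arbitrary path $\pi$ between $X_{t-\gamma_{xy}}$ and $Y_t$ in $\mathcal{G}_w^{-}$ and split on the orientation of the edge of $\pi$ incident to $Y_t$. If it points into $Y_t$, say $W\to Y_t$, then $W$ is a parent of $Y_t$ in $\mathcal{G}_w^{-}$, so $W\in\mathcal{Z}$ by the first fact, and $W$ is a non-collider on $\pi$ because one of its incident path-edges leaves it, so $\pi$ is blocked at $W$. If it points out of $Y_t$, I would walk along $\pi$ starting from $Y_t$: the vertices encountered before the first collider $W_j$ form a directed segment $Y_t\to W_0\to\cdots\to W_{j-1}\to W_j$, so $W_j$ and all of its descendants are descendants of $Y_t$, whence by the second fact none of them is in $\mathcal{Z}$ and $\pi$ is blocked at the collider $W_j$; the only alternative, that $\pi$ has no collider after $Y_t$, would make $\pi$ itself a directed path from $Y_t$ to $X_{t-\gamma_{xy}}$, which together with $X_{t-\gamma_{xy}}\to Y_t$ contradicts acyclicity of $\mathcal{G}_w$.

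The step I expect to be the main obstacle is the second fact, namely checking that none of the vertices swept into $\mathcal{Z}$ — over all lags up to $\gamma_{max}$, including the instantaneous slice at time $t$ and, when there is a loop on $Y$, the past copies of $Y$ — is a descendant of $Y_t$. This is exactly where acyclicity of the summary graph and the forward-in-time ordering of the window graph are both needed, and it is also what dictates the two exclusions built into Definition~\ref{def:single_door_summarygraph} (removing $X_{t-\gamma_{xy}}$ from $\mathcal{X}$, and keeping only strictly past copies of $Y$ in $\mathcal{Y}$); the remainder is a routine transcription of the single-door argument, parallel to the proof sketch of Proposition~\ref{prop:backdoor}.
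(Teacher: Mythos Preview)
Your proposal is correct and follows essentially the same approach as the paper's proof sketch: both arguments rest on the two facts that $\mathcal{Z}$ contains every parent of $Y_t$ in the window graph other than $X_{t-\gamma_{xy}}$, and that $\mathcal{Z}$ contains no descendant of $Y_t$ (the latter using acyclicity of the summary graph together with the temporal ordering). The paper organises this as a case split on whether $X$ and $Y$ carry self-loops, whereas you give a cleaner unified argument by deleting the direct edge and splitting on the orientation of the last edge of the path at $Y_t$; this is the standard single-door verification and subsumes the paper's loop/no-loop discussion, since the role of $\mathcal{X}$ and $\mathcal{Y}$ is already captured in your first fact.
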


\begin{sproof}
	%so all what is left to prove is that all directed activated path from $X$ to $Y$, except $X_{t-\gamma_{xy}}\rightarrow Y_t$, are blocked by $\{\mathcal{M}_{t-\gamma_{xy}},\\ \cdots, \mathcal{M}_{t}\}\bigcup \mathcal{X}'\bigcup \mathcal{Y}$.
	If $X$ and $Y$ have no loops in $\mathcal{G}$, then $\mathcal{B}_{t}\cup \cdots\cup \mathcal{B}_{t-\gamma_{max}}$ is sufficient to block all paths from $X_{t-\gamma_{xy}}$ to $Y_t$ except $X_{t-\gamma_{xy}}\rightarrow Y_t$ since and all possible parents of $Y_t$ are in $\mathcal{B}_{t}\cup \cdots\cup \mathcal{B}_{t-\gamma_{max}}$. Given that $\mathcal{G}$ is acyclic $\mathcal{B}_{t}\cup \cdots\cup \mathcal{B}_{t-\gamma_{max}}$ cannot create any new activated path that is not blocked by $\mathcal{B}_{t}\cup \cdots\cup \mathcal{B}_{t-\gamma_{max}}$ since no vertex in $\mathcal{B}_{t}\cup \cdots\cup \mathcal{B}_{t-\gamma_{max}}$ is a  descendant of $Y_t$. If $X$ and $Y$ have loops then adjusting on $\mathcal{B}$ up to $\gamma_{max}$ cannot block all paths because there can be an activated path passing by the future of $X_{t-\gamma_{xy}}$ or by the past of $Y_{t}$. The only way to block these types of paths is to add  $\mathcal{X}\cup \mathcal{Y}$ to the adjustment set.
\end{sproof}

\subsection{An algorithm for root cause identification}

Here, we describe our main method called EasyRCA\footnote{Code available at https://github.com/ckassaad/\\EasyRCA}, in which the pseudocode is provided in Algorithm~\ref{algo:EasyRCA}. The algorithm starts by finding linked anomalous graphs (line 1). 
Then for each linked anomalous graph, it searches for the sub-roots and time-defying vertices (line 3).
Finally, it searches for the rest of the root causes by comparing direct effects in the normal regime with direct effects in the anomalous regime (lines 4-16).
%$\Pr_{\bar{N}}(X;Y\mid \mathcal{Z})$ represents the conditional mutual information between $X$ and $Y$ given $\mathcal{Z}$ in the anomalous regime and  $\Pr_{{N}}(X;Y\mid \mathcal{Z})$ represents the conditional mutual information between $X$ and $Y$ given $\mathcal{Z}$ in the normal regime.
The conditions in lines 12 and 14 need the minimality condition because if $X$ and $Y$ are statistically independent given the adjustment set in the normal regime then an intervention on $Y$ might not imply any change to the statistical distribution thus one cannot conclude on the presence of interventions.
%However, this is not always necessarily true \cite{minimality}. This is why the minimality condition is needed.
The for-loop in line 2 can be parallelized since as showed in Proposition~\ref{prop:disjoint_set_of_link_anomalous_graphs} and \ref{prop:modularity_of_link_anomalous_graphs}, linked anomalous graphs are modular. % with respect to each other.

\begin{theorem}
	Given an ASCGL $\mathcal{G}=(\mathcal{V}, \mathcal{E})$, a set anomalous vertices $\mathcal{A} \subseteq \mathcal{V}$, the distribution of the time series in the normal regime $\mathcal{{N}}$ and in the anomalous regime $\mathcal{\bar{N}}$, and the maximal lag between a cause and an effect $\gamma_{max}$, under Assumption~\ref{ass:anomaly_propagation} and the minimality condition, EasyRCA is capable of identifying the set of root cause $\mathcal{C}$ of $\mathcal{A}$.
\end{theorem}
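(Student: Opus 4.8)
The plan is to show that the output of EasyRCA coincides with $\mathcal{C}$, by proving soundness (every flagged vertex is a root cause) and completeness (every root cause is flagged), after first reducing the problem to a single linked anomalous graph. As noted in the text, Assumption~\ref{ass:anomaly_propagation} and Definition~\ref{def:rc} give $\mathcal{C}\subseteq\mathcal{A}$, and Propositions~\ref{prop:disjoint_set_of_link_anomalous_graphs} and~\ref{prop:modularity_of_link_anomalous_graphs} imply that line~1 produces a well-defined partition $\mathcal{A}=\mathcal{A}^1\sqcup\cdots\sqcup\mathcal{A}^m$ into linked anomalous graphs with $\mathcal{C}=\bigcup_i\mathcal{C}^i$, where $\mathcal{C}^i$ is the set of root causes inside $\mathcal{L}^i$. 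Hence it suffices to prove that, for each $i$, the set of vertices emitted by line~3 together with those flagged in lines~4--16 while processing $\mathcal{L}^i$ equals $\mathcal{C}^i$.

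For soundness, line~3 is immediate from Proposition~\ref{prop:roots_sub_root_time_defying}, which gives $\mathcal{R}^i\cup\mathcal{T}^i\subseteq\mathcal{C}^i$. For lines~4--16 I would argue in two sub-steps. First, identifiability: by Proposition~\ref{prop:singledoor} the adjustment set of Definition~\ref{def:single_door_summarygraph} blocks, in every window causal graph compatible with $\mathcal{G}$, all paths from $X_{t-\gamma_{xy}}$ to $Y_t$ except the arc $X_{t-\gamma_{xy}}\rightarrow Y_t$; under linearity this is the single-door criterion, so $DE^{N}_{X_{t-\gamma_{xy}}\rightarrow Y_t}$ is identified from $\mathcal{N}$, and since parametric and structural interventions never \emph{add} a parent to $Y$, no compatible window graph of the anomalous regime gains an arc, so the very same set stays valid and $DE^{\bar N}_{X_{t-\gamma_{xy}}\rightarrow Y_t}$ is identified from $\bar{\mathcal{N}}$. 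Second, interpretation: under linearity $DE_{X_{t-\gamma_{xy}}\rightarrow Y_t}$ equals the structural coefficient of $X_{t-\gamma_{xy}}$ in the equation of $Y_t$, which depends only on the causal mechanism $f^y$ and on whether $X$ is still a parent of $Y$; thus $DE^{N}\neq DE^{\bar N}$ forces a change of $f^y$ or the deletion of an arc into $Y$, i.e.\ a parametric or structural intervention on $Y$, and since $Y\in\mathcal{A}^i$ its marginal has changed — so $Y\in\mathcal{C}^i$ by Definition~\ref{def:rc}. The minimality condition makes the guard in lines 12 and 14 meaningful: since $X\rightarrow Y$, minimality yields $DE^{N}\neq 0$, so a vanishing $DE^{\bar N}$ (a deleted arc) is genuinely detected and a spuriously null normal-regime effect cannot occur.

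For completeness, take $Y\in\mathcal{C}^i$. If $Y$ is a sub-root or time-defying it is emitted in line~3. Otherwise $Y$ has an anomalous parent in $\mathcal{L}^i$ and underwent a parametric or structural intervention, and I would show this intervention surfaces as a changed direct effect on one of the in-arcs of $Y$ inspected in lines~4--16: a structural intervention deletes an arc $X\rightarrow Y$ with $X\in\mathcal{A}^i$, and by minimality $DE^{\bar N}_{X_{t-\gamma_{xy}}\rightarrow Y_t}=0\neq DE^{N}_{X_{t-\gamma_{xy}}\rightarrow Y_t}$; a parametric intervention changes $f^y$, which under linearity is captured by a change in some coefficient of an arc into $Y$, hence a change of the corresponding direct effect. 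Taking the union of the flagged vertices over all $i$ and using the reduction of the first paragraph then gives that EasyRCA returns exactly $\mathcal{C}$.

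The main obstacle is the completeness step — specifically the claim that a parametric intervention on $Y$ must register as a change of a direct effect attached to an arc the algorithm actually examines. This is exactly where linearity and minimality are indispensable (a change confined to the intercept or to the noise term of $f^y$, or a coefficient change on an arc from a vertex lying outside $\mathcal{L}^i$, would otherwise escape detection), and one must verify that the witnessing parent is itself anomalous so that the arc belongs to $\mathcal{L}^i$; settling precisely which interventions Definition~\ref{def:rc} is meant to count as root causes is the delicate modelling point here. A secondary, more routine difficulty is checking that the adjustment set of Definition~\ref{def:single_door_summarygraph}, built from the normal-regime graph, remains valid in the anomalous regime, which follows because interventions never introduce new arcs.
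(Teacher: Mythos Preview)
Your proposal is correct and takes essentially the same approach as the paper: the paper's own proof is a one-line sketch that simply cites Propositions~\ref{prop:disjoint_set_of_link_anomalous_graphs}, \ref{prop:modularity_of_link_anomalous_graphs}, \ref{prop:roots_sub_root_time_defying}, and \ref{prop:singledoor}, and your soundness/completeness argument is precisely the natural unpacking of that citation. One small correction to your self-identified obstacle: line~7 of Algorithm~\ref{algo:EasyRCA} ranges over $Parents(Y,\mathcal{G})$ rather than $Parents(Y,\mathcal{L}^i)$, so arcs from non-anomalous parents are in fact inspected and your worry about the witnessing parent lying outside $\mathcal{L}^i$ does not arise.
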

\begin{sproof}
	It follows from Propositions~\ref{prop:disjoint_set_of_link_anomalous_graphs},~ \ref{prop:modularity_of_link_anomalous_graphs},~\ref{prop:roots_sub_root_time_defying}, \ref{prop:singledoor}.
\end{sproof}

Note that we can also distinguish between parametric and structural interventions. Given that $DE^{N}_{X_{t-\gamma_{xy}}\rightarrow Y_t} \ne DE^{\bar{N}}_{X_{t-\gamma_{xy}}\rightarrow Y_t}$, if $DE^{\bar{N}}_{X_{t-\gamma_{xy}}\rightarrow Y_t}=0$, we conclude that the intervention on $Y$ is structural, otherwise, we conclude that it is parametric.

\begin{algorithm}[h!]
	\caption{EasyRCA}
	\label{algo:EasyRCA}
	\begin{algorithmic}[1]
		\REQUIRE ASCGL $\mathcal{G}=(\mathcal{V}, \mathcal{E})$, distribution of the time series in the normal regime $\mathcal{{N}}$ and in the anomalous regime $\mathcal{\bar{N}}$, maximal lag $\gamma_{max}$, Anomalies $\mathcal{A}$
		\STATE $\mathcal{L}^1, \cdots, \mathcal{L}^m =$ list of linked anomalous graphs as in Definition~\ref{def:LAG}
		\FOR{$i \in \{1,\cdots, m\}$}
		\STATE Identify sub-root vertices $\mathcal{R}^i$ and time defying vertices $\mathcal{T}^i$ using Definition~\ref{def:sub_root} and \ref{def:time_defying}
		%		\STATE Let $\mathcal{P}^i=[]$ and $\mathcal{S}^i=[]$
		\STATE Let $\mathcal{D}^i=[]$
		\STATE Let $\mathcal{A}^i$ be the set of vertices in $\mathcal{L}^i$
		\FOR{$Y$ in $\mathcal{A}^i\backslash\{\mathcal{R}^i\cup  \mathcal{T}^i\}$} 
		\FOR{$X$ in $Parents(Y, \mathcal{G})$}
		\STATE $\bar\gamma_{xy}$: anomaly lag between $X$ and $Y$ 		
		\FOR{$\gamma_{xy}$ in $\{\bar\gamma_{xy}, \cdots, \gamma_{max}\}$}
		\STATE Identify 
		$\mathcal{B}_{t}\cup \cdots\cup \mathcal{B}_{t-\gamma_{max}}\cup \mathcal{X}\cup \mathcal{Y}$ using Definition~\ref{def:single_door_summarygraph}
		\STATE Estimate $DE^{N}_{X_{t-\gamma_{xy}}\rightarrow Y_t}$
		\IF{$DE^{N}_{X_{t-\gamma_{xy}}\rightarrow Y_t}\ne 0$} 
		\STATE Estimate $DE^{\bar{N}}_{X_{t-\gamma_{xy}}\rightarrow Y_t}$
		\IF{$DE^{N}_{X_{t-\gamma_{xy}}\rightarrow Y_t} \ne DE^{\bar{N}}_{X_{t-\gamma_{xy}}\rightarrow Y_t}$}
		%		\IF{$DE^{\bar{N}}_{X_{t-\gamma_{xy}}\rightarrow Y_t}=0$}
		%		\STATE $\mathcal{S}^i = [\mathcal{S}^i, Y]$
		%		\ELSE 
		%		\STATE $\mathcal{P}^i = [\mathcal{P}^i, Y]$
		%		\ENDIF
		\STATE $\mathcal{D}^i = [\mathcal{D}^i, Y]$
		\STATE Break
		\ENDIF 
		\ENDIF 
		\ENDFOR 
		\ENDFOR 	
		\ENDFOR 	
		\ENDFOR
		\STATE \textbf{Return} $\mathcal{R}$, $\mathcal{T}$, $\mathcal{D}$ %$\mathcal{P}$, $\mathcal{S}$
	\end{algorithmic}
\end{algorithm}
%\vspace{-0.5em}

\section{EXPERIMENTS}\label{sec:exp}

We propose first an extensive analysis on simulated data,
generated from random causal graphs; then we perform an
analysis on a real word dataset.

\subsection{Experimental Setup}

%Without loss of generality in our experiments we assumed linear and gaussian distributions.

In practice, to test if $DE^{N}_{X_{t-\gamma_{xy}}\rightarrow Y_t} \ne DE^{\bar{N}}_{X_{t-\gamma_{xy}}\rightarrow Y_t}$,  we fit 11 multiple linear regressions: %\footnote{Other methods can also be used to test if $TE^{N}_{Y_t\rightarrow X_{t-\gamma_{xy}}} \ne TE^{\bar{N}}_{Y_t\rightarrow X_{t-\gamma_{xy}}}$}: 
\begin{equation*}
Y_t = \hat{a}_x X_{t-\gamma_{xy}} + \sum_{B_{t-\gamma_{by}}\in \mathcal{B}_{t}\cup \cdots\cup \mathcal{B}_{t-\gamma_{max}}\cup \mathcal{X}\cup \mathcal{Y}} \hat{a}_b B_{t-\gamma_{by}}  + \epsilon^y_t, 
\end{equation*}
such that $\mathcal{B}_{t}\cup \cdots\cup \mathcal{B}_{t-\gamma_{max}}\cup \mathcal{X}\cup \mathcal{Y}$ is identified using Definition~\ref{def:single_door_summarygraph}.
One of them fitted on the anomalous data and 10 fitted on different chunks of the normal data.
Then using the Grubbs-test \citep{Grubbs_1950} we check if the coefficient $\hat{a}_x$ of the anomalous data is significantly different than the 10 others $\hat{a}_x$ from the normal data.
To test if $TE^{\bar{N}}_{X_{t-\gamma_{xy}}\rightarrow Y_t} = 0$,  we use a t-test on the coefficient $\hat{a}_x$ of the anomalous data.

\textbf{Baselines}:
We compare EasyRCA with three other methods\footnote{We implemented CloudRanger and MicroCause and we adapted WhyMDC based on the DoWhy package.}: CloudRanger, MicroCause and a naive adaptation of WhyMDC to time series for which we provide a window causal graph. 
Since CloudRanger and MicroCause try to solve a harder problem compared to EasyRCA by inferring the summary causal graph from anomalous data (while EasyRCA considers that the summary causal graph is given), we also consider another version of EasyRCA, denoted as EasyRCA$^*$, where we suppose that the summary causal graph is not given. In the first step of EasyRCA$^*$, we infer the window causal graph from normal data using PCMCI \citep{Runge_2020}, the same causal discovery algorithm used by MicroCause, and then we deduce the summary causal graph from it. Note that the summary causal graph obtained from the window graph that is inferred by PCMCI can be cyclic even if the true summary causal graph is acyclic (due to estimation errors). In such cases, we consider that EasyRCA$^*$ does not identify any root cause.

\textbf{Hyper-parameters}:
For EasyRCA, EasyRCA$^*$ and MicroCause, we set the maximal lag $\gamma_{max}$ to $3$ and for all methods (even though the true $\gamma_{max}$ is smaller in our simulation study), we set the significance threshold to $0.01$. 
For EasyRCA$^*$, CloudRanger and MicroCause we use a Fisher-z-test \citep{Kalisch_2007}, which is commonly used for causal discovery when linearity and gaussianity are satisfied. 
Furthermore, for CloudRanger and MicroCause we set the walk length to $1000$ and the backward step threshold to $0.1$.
Lastly, other hyper-parameters in WhyMDC were set to the default values in the DoWhy package.

\textbf{Evaluation}:
To assess the quality of identifying root causes, we use the F1-score.
Since by construction EasyRCA identifies sub-root and time defying vertices as root causes we do not evaluate their detection. This ensures a fair comparison with other methods.

\subsection{Simulated Data}

For simulated data, we start by randomly generating $30$ different ASCGLs such that each graph contains $6$ vertices, has a maximal degree between $4$ and $5$, and has one root vertex.
We consider that all lags in the window causal graph associated with any of the generated ASCGL are equal to $1$. So the generative process (the SCM)  is the following:
\begin{equation*}
Y_t = \sum_{X_{t-1}\in Parents(Y_t, \mathcal{G}_w)} aX_{t-1} + 0.1\xi^y_t
\end{equation*}
where $a\sim U\{0.1, 1\}$,  $\xi^y_t \sim \mathcal{N}(0, 1)$,
$Y_t$ denotes the value of the vertex at time $t$, $Parents(Y_t)$ denotes the direct parents of $Y_t$ in the window causal graph. 

For each ASCGL, we choose two root causes (two vertices that will undergo an intervention): the first is the root of the ASCGL and the second is a randomly chosen vertex among the non-root vertices. 
We propagate the effect of each intervention according to the generating process toward all the descendants of the vertex which underwent an intervention.
We set the starting time of each intervention according to the generative process. For example, if the root vertex $X$ has an intervention at time $t$ and a vertex $Y$ randomly selected to undergo an intervention is a child of $X$ then the starting time of the intervention on $Y$ is $t+1$.
In general, the starting time of the intervention does not need to respect the generative process, but we chose to respect it to avoid any time-defying vertices which would give an advantage to our method.

In our experiments, we consider the two types of interventions separately and we vary the anomaly size between $100$ and $2000$. 
In the case of structural interventions, values of the root vertex and values of a randomly chosen non-root vertex in the anomalous interval are replaced by data drawn form the distribution $Exp(2)$.
In the case of parametric interventions, values of the root are set similarly to structural interventions. Then values of the non-root vertex are regenerated with new coefficients from $U(0.1, 1)$.

\textbf{Results}:
In Figure~\ref{fig:res_sim_1}, we report the performance of each method at detecting structural interventions %from the 30 randomly generated graphs 
with respect to the anomaly size. As one can see, EasyRCA and EasyRCA$^*$ clearly outperform other methods, and their performances increases (and their variance decreases) significantly between the anomaly of size $100$ and the anomaly of size $2000$ reaching an F1-score of $1$ for both EasyRCA and EasyRCA$^*$. The small difference in the performance of EasyRCA and EasyRCA$^*$ shows that our method is robust with respect to small errors in the ASCGL.
WhyMDC and MicroCause have similar results and outperform CloudRanger. 
In Figure~\ref{fig:res_sim_2}, we report the performance of each method at detecting parametric interventions %from the 30 randomly generated graphs 
with respect to the anomaly size. As before, EasyRCA and EasyRCA$^*$ outperforms other methods but now the difference between EasyRCA and EasyRCA$^*$ is more visible 
and all other methods suffer. However, it is worth noting, that unlike CloudRanger, WhyMDC and MicroCause were able to detect the root vertex of each graph as a root cause.

% Gamma max = 3
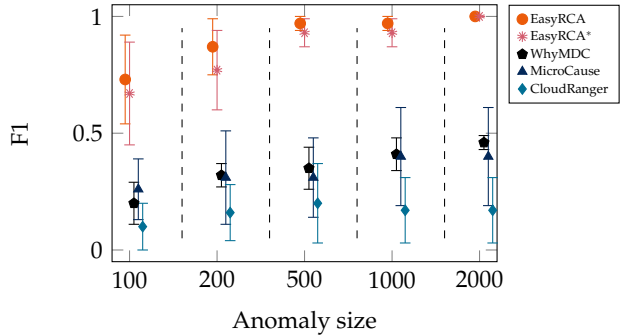
\begin{figure}[htb]%[ht!]
	\centering
	\begin{tikzpicture}[font=\small]
	\renewcommand{\axisdefaulttryminticks}{4}
	\pgfplotsset{every major grid/.append style={densely dashed}}
	\pgfplotsset{every axis legend/.append style={cells={anchor=west},fill=white, at={(0.02,0.98)}, anchor=north west}}
	\begin{axis}[
	%			xmode=log,
	log ticks with fixed point,
	xmin = 0.8,
	xmax = 5.2,
	xtick = {1,2, 3, 4, 5},
	xticklabels = {$100$, $200$, $500$, $1000$, $2000$},
	ymin=-0.05,
	ymax=1.05,
	grid=minor,
	scaled ticks=true,
	xlabel = {Anomaly size},
	ylabel = {F1},
	height = 5.0cm,
	width=6.7cm,
	legend style={nodes={scale=0.55, transform shape}},
	legend pos=outer north east
	]
	\addplot[ugaorange,only marks,mark=*, error bars/.cd, y dir=both,y explicit] plot coordinates{
		(0.95, 0.73) +- (0.19, 0.19)
		(1.95, 0.87) +- (0.12, 0.12)
		(2.95, 0.97) +- (0.03, 0.03)
		(3.95, 0.97) +- (0.03, 0.03)
		(4.95, 1) +- (0.0, 0.0)
	};
	\addplot[easyred,only marks,mark=10-pointed star, error bars/.cd, y dir=both,y explicit] plot coordinates{
		(1, 0.67) +- (0.22, 0.22)
		(2, 0.77) +- (0.17, 0.17)
		(3, 0.93) +- (0.06, 0.06)
		(4, 0.93) +- (0.06, 0.06)
		(5, 1) +- (0.0, 0.0)
	};
	\addplot[black,only marks,mark=*, mark=pentagon*, error bars/.cd, y dir=both,y explicit] plot coordinates{
		(1.05, 0.2) +- (0.09, 0.09)
		(2.05, 0.32) +- (0.05, 0.05)
		(3.05, 0.35) +- (0.09, 0.09)
		(4.05, 0.41) +- (0.07, 0.07)
		(5.05, 0.46) +- (0.03, 0.03)
	};
	\addplot[easyblue,only marks, mark=triangle*, error bars/.cd, y dir=both,y explicit] plot coordinates{
		(1.1, 0.26) +- (0.13, 0.13)
		(2.1, 0.31) +- (0.2, 0.2)
		(3.1, 0.31) +- (0.17, 0.17)
		(4.1, 0.4) +- (0.21, 0.21)
		(5.1, 0.4) +- (0.21, 0.21)
	};
	\addplot[easycyan,only marks,mark=diamond*, error bars/.cd, y dir=both,y explicit] plot coordinates{
		(1.15, 0.1) +- (0.1, 0.1)
		(2.15, 0.16) +- (0.12, 0.12)
		(3.15, 0.2) +- (0.17, 0.17)
		(4.15, 0.17) +- (0.14, 0.14)
		(5.15, 0.17) +- (0.14, 0.14)
	};
	\draw [dashed] (80,10) -- (80,100);
	\draw [dashed] (180,10) -- (180,100);
	\draw [dashed] (280,10) -- (280,100);
	\draw [dashed] (380,10) -- (380,100);
	\legend{{EasyRCA}, {EasyRCA$^*$}, {WhyMDC}, {MicroCause}, {CloudRanger}}
	\end{axis}
	\end{tikzpicture}
	\caption{Mean and variance of F1-scores with respect to structural interventions over 30 graphs containing one linked anomalous graph with one sub-root vertex and one structural intervention.}
	\label{fig:res_sim_1}
\end{figure}

% Gamma max = 3
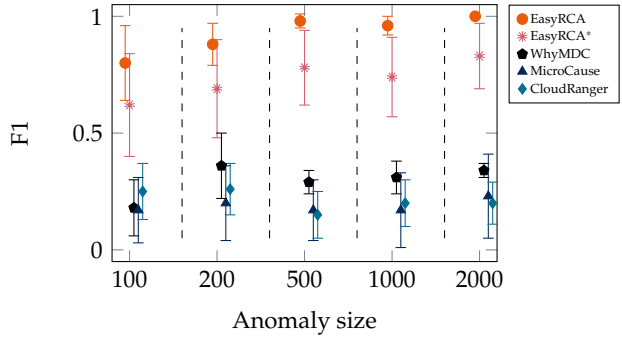
\begin{figure}[htb]%[ht!]
	\centering
	\begin{tikzpicture}[font=\small]
	\renewcommand{\axisdefaulttryminticks}{4}
	\pgfplotsset{every major grid/.append style={densely dashed}}
	\pgfplotsset{every axis legend/.append style={cells={anchor=west},fill=white, at={(0.02,0.98)}, anchor=north west}}
	\begin{axis}[
	%			xmode=log,
	log ticks with fixed point,
	xmin = 0.8,
	xmax = 5.2,	
	xtick = {1,2, 3, 4, 5}, 
	xticklabels = {$100$, $200$, $500$, $1000$, $2000$},
	ymin=-0.05,
	ymax=1.05,
	grid=minor,
	scaled ticks=true,
	xlabel = {Anomaly size},
	ylabel = {F1},
	height = 5.0cm,
	width=6.7cm,
	legend style={nodes={scale=0.55, transform shape}},
	legend pos=outer north east
	]
	\addplot[ugaorange,only marks,mark=*, error bars/.cd, y dir=both,y explicit] plot coordinates{
		(0.95, 0.8) +- (0.16, 0.16)
		(1.95, 0.88) +- (0.09, 0.09)
		(2.95, 0.98) +- (0.03, 0.03)
		(3.95, 0.96) +- (0.04, 0.04)
		(4.95, 1) +- (0.0, 0.0)
	};
	\addplot[easyred,only marks,mark=10-pointed star, error bars/.cd, y dir=both,y explicit] plot coordinates{
		(1, 0.62) +- (0.22, 0.22)
		(2, 0.69) +- (0.21, 0.21)
		(3, 0.78) +- (0.16, 0.16)
		(4, 0.74) +- (0.17, 0.17)
		(5, 0.83) +- (0.14, 0.14)
	};
	\addplot[black,only marks,mark=*, mark=pentagon*, error bars/.cd, y dir=both,y explicit] plot coordinates{
		(1.05, 0.18) +- (0.12, 0.12)
		(2.05, 0.36) +- (0.14, 0.14)
		(3.05, 0.29) +- (0.05, 0.05)
		(4.05, 0.31) +- (0.07, 0.07)
		(5.05, 0.34) +- (0.03, 0.03)
	};
	\addplot[easyblue,only marks, mark=triangle*, error bars/.cd, y dir=both,y explicit] plot coordinates{
		(1.1, 0.17) +- (0.14, 0.14)
		(2.1, 0.2) +- (0.16, 0.16)
		(3.1, 0.17) +- (0.13, 0.13)
		(4.1, 0.17) +- (0.16, 0.16)
		(5.1, 0.23) +- (0.18, 0.18)
	};
	
	\addplot[easycyan,only marks,mark=diamond*, error bars/.cd, y dir=both,y explicit] plot coordinates{
		(1.15, 0.25) +- (0.12, 0.12)
		(2.15, 0.26) +- (0.11, 0.11)
		(3.15, 0.15) +- (0.1, 0.1)
		(4.15, 0.2) +- (0.1, 0.1)
		(5.15, 0.2) +- (0.09, 0.09)
	};
	\draw [dashed] (80,10) -- (80,100);
	\draw [dashed] (180,10) -- (180,100);
	\draw [dashed] (280,10) -- (280,100);
	\draw [dashed] (380,10) -- (380,100);
	\legend{{EasyRCA}, {EasyRCA$^*$}, {WhyMDC}, {MicroCause}, {CloudRanger}}
	\end{axis}
	\end{tikzpicture}
	\caption{Mean and variance of F1-scores with respect to parametric interventions over 30 graphs containing one linked anomalous graph with one sub-root vertex and one parametric intervention.}
	\label{fig:res_sim_2}
\end{figure}

\subsection{Real Data}

For real data, we consider a dataset\footnote{The real IT monitoring data is available at
	\url{https://easyvista2015-my.sharepoint.com/personal/aait-bachir_easyvista_com/_layouts/15/onedrive.aspx?id=\%2Fpersonal\%2Faait\%2Dbachir\%5Feasyvista\%5Fcom\%2FDocuments\%2FLab\%2FPublicData&ga=1}
}
which consists of eight time series collected from an IT monitoring system with a one-minute sampling rate provided by EasyVista\footnote{https://www.easyvista.com/fr/produits/ev-observe} such that each of these time series is considered anomalous and all collective anomalies are considered to have the same time of appearance and of size $100$.
%A plot of the eight time series is provided in Figure~\ref{fig:storm_data} where parts in red correspond to anomalies.
The corresponding ASCGL is provided in Figure~\ref{fig:storm_graph} 
where PMDB 
%(Pre-Message dispatcher bolt) 
represents the extraction of some information about the messages received by the Storm ingestion system;
MDB 
%(Message dispatcher bolt) 
refers to an activity of a process that orient messages to other process with respect to different types of messages;
CMB 
%(Check message bolt)  
represents the activity of extraction of metrics from messages;
MB 
%(Metric bolt) 
represents the activity of insertion of data in a database;
LMB 
%(Last metric bolt) 
reflects the updates the last values of metrics in Cassandra;
RTMB 
%(Real time merger bolt) 
represents the activity of searching to merge of data with information coming from the check message bolt;
GSIB 
%(Group status information bolt) 
represents the activity of insertion of historical status in database.
ESB 
%(Elastic search bolt) 
represents the activity of writing data in Elasticsearch. 
According to EasyVista's system experts, PMDB and ESB are expected to be the root causes of these anomalies.

EasyRCA inferred $3$ roots causes, PMDB as a root vertex, in addition to RTMB and ESB as structural interventions. %PMDB is detected as a sub-root vertex, and RTMB and ESB are detected as vertices with parametric interventions.
EasyRCA$^*$ inferred $5$ roots causes, PMDB, GSIB and MB as root vertices, in addition to RTMB and ESB as structural interventions.
MicroCause inferred that PMDB and MB are the root causes of the anomalies.
CloudRanger inferred that GSIB and MDB are the root causes.
We did not apply WhyMDC in this real world application because the true window causal graph is unknown. 
In terms of the trade-off between false positives and false negatives, EasyRCA gives the best result.

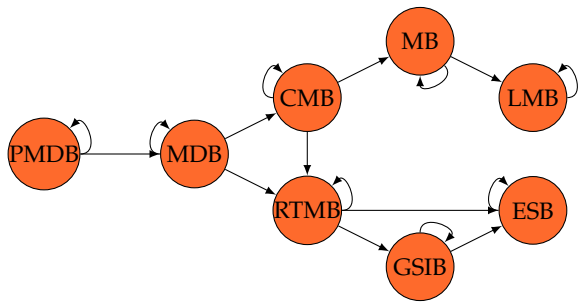
\begin{figure}
	\centering
	\begin{tikzpicture}[{black, circle, draw, inner sep=0}]
	\tikzset{nodes={draw,rounded corners},minimum height=0.9cm,minimum width=0.9cm, font=\footnotesize}	
	\tikzset{anomalous/.append style={fill=easyorange}}
	\tikzset{rc/.append style={fill=easyorange}}
	
	\node[anomalous] (PMDB) at (0,0) {PMDB} ;
	\node[anomalous] (MDB) at (2,0) {MDB};
	\node[anomalous] (CMB) at (3.5,0.75) {CMB};
	\node[anomalous] (MB) at (5,1.5) {MB};
	\node[anomalous] (LMB) at (6.5,0.75) {LMB};
	\node[anomalous] (RTMB) at (3.5,-0.75) {RTMB} ;
	\node[anomalous] (GSIB) at (5,-1.5) {GSIB};
	\node[anomalous] (ESB) at (6.5,-0.75) {ESB} ;
	
	\draw[->,>=latex] (PMDB) -- (MDB);
	\draw[->,>=latex] (MDB) -- (CMB);
	\draw[->,>=latex] (CMB) -- (MB);
	\draw[->,>=latex] (CMB) -- (RTMB);
	\draw[->,>=latex] (MB) -- (LMB);
	\draw[->,>=latex] (MDB) -- (RTMB);
	\draw[->,>=latex] (RTMB) -- (GSIB);
	\draw[->,>=latex] (RTMB) -- (ESB);
	\draw[->,>=latex] (GSIB) -- (ESB);
	
	\draw[->,>=latex] (PMDB) to [out=0,in=45, looseness=2] (PMDB);
	\draw[->,>=latex] (MDB) to [out=180,in=135, looseness=2] (MDB);
	\draw[->,>=latex] (CMB) to [out=180,in=135, looseness=2] (CMB);
	\draw[->,>=latex] (MB) to [out=-45,in=-90, looseness=2] (MB);
	\draw[->,>=latex] (LMB) to [out=0,in=45, looseness=2] (LMB);
	\draw[->,>=latex] (RTMB) to [out=0,in=45, looseness=2] (RTMB);
	\draw[->,>=latex] (GSIB) to [out=90,in=45, looseness=2] (GSIB);
	\draw[->,>=latex] (ESB) to [out=180,in=135, looseness=2] (ESB);
	
	\end{tikzpicture}
	\caption{ASCGL of the normal regime of an IT monitoring system. All vertices are anomalous in the anomalous regime. According to EasyVista's system experts, PMDB and ESB are expected to be the root causes of these anomalies.}
	\label{fig:storm_graph}
\end{figure}%

\section{Conclusion}
\label{sec:conc}
We adressed the problem of identifying root causes of collective anomalies using observational time series and an ASCGL of the normal regime of a given system. We showed that the problem can be divided into many independent subproblems and that all root causes can be identified using the graph and the data.
For future work, it would be interesting to extend this method for cyclic summary causal graphs, for nonlinear SCMs and to allow for hidden common causes.%, i.e., relax the causal sufficiency assumption%, and to be able to distinguish between parametric interventions and structural interventions

\subsubsection*{Acknowledgements}
We thank Ali Aït-Bachir, Christophe de Bignicourt and Hosein Mohanna from EasyVista for providing the IT monitoring data along with the underlying causal graph and for localizing anomalies in the data.

\bibliographystyle{plainnat}
\bibliography{references.bib}

\end{document}